\documentclass{article}

\PassOptionsToPackage{dvipsnames}{xcolor}

\makeatletter
\def\input@path{{ArXiv/}}
\makeatother
\usepackage{ppe_arxiv,times}

\usepackage[utf8]{inputenc}
\usepackage[T1]{fontenc}
\usepackage{newunicodechar}
\newunicodechar{–}{\textendash}
\makeatletter
\@ifundefined{IfDocumentMetadataT}{\newcommand\IfDocumentMetadataT[1]{}}{}
\makeatother
\usepackage{hyperref}
\usepackage{url}
\usepackage{booktabs}
\usepackage{amsfonts}
\usepackage{amsmath}
\usepackage{amssymb}
\usepackage{amsthm}
\newtheorem{theorem}{Theorem}

\theoremstyle{remark}

\theoremstyle{plain}

\usepackage{microtype}
\usepackage{graphicx}
\usepackage[dvipsnames]{xcolor}
\usepackage[normalem]{ulem}
\usepackage{colortbl}
\usepackage{subcaption}
\usepackage{nicefrac}
\usepackage{pdflscape}
\usepackage{multirow}
\usepackage{makecell}
\usepackage{caption}
\usepackage{placeins}
\usepackage{xspace}
\usepackage{threeparttable}
\usepackage{xcolor}
\usepackage{pifont}
\usepackage{enumitem}

\ifdefined\pdfsuppressptexinfo
\fi
\newcommand{\ours}{\texttt{PPE}\xspace}
\newcommand{\mst}{\textsc{MST}\xspace}
\newcommand{\persformer}{\textsc{Persformer}\xspace}
\newcommand{\xpert}{\textsc{xPerT}\xspace}
\newcommand{\perslay}{\textsc{PersLay}\xspace}
\newcommand{\persimage}{\textsc{PI}\xspace}

\newcommand{\Tref}[1]{Tab.~\ref{#1}}
\newcommand{\Aref}[1]{Appendix~\ref{#1}}

\title{Persistence Positional Encoding: A Minimal Transformer Recipe for Persistence Diagrams}

\author{%
  Eunwoo Heo$^1$\thanks{Equal contribution.} \quad
  Ganghyun Kim$^1$\footnotemark[1] \quad
  Jaejun Yoo$^1$ \\
  $^1$Graduate School of Artificial Intelligence,
  Ulsan National Institute of Science and Technology
}

\graphicspath{{ArXiv/}{./}}
\hypersetup{
  pdftitle={Persistence Positional Encoding: A Minimal Transformer Recipe for Persistence Diagrams},
  pdfauthor={Eunwoo Heo, Ganghyun Kim, Jaejun Yoo}
}

\begin{document}

\maketitle

\begin{abstract}
Persistence diagrams (PDs) represent multiscale topological information as multisets of birth--death pairs $(b,d)$.
How best to use them with Transformers, however, remains unclear. Connecting PDs to Transformers requires choosing how topological information is tokenized and exposed to self-attention. Prior methods use specialized designs for tokenization, aggregation, Transformer blocks, or attention, but are hard to compare since reported results rarely share PD inputs, splits, or model-selection rules.
We instead confine the diagram-specific design to point representation, mapping each birth--death pair through fixed random Fourier features while leaving the Transformer backbone unchanged, and call this recipe Persistence Positional Encoding (PPE). 
In matched controls on eight TU graph benchmarks (TU8), changing only point representation shifts macro accuracy by 5.49 to 6.47 percentage points (pp), against 0.47 to 1.45~pp from changing the downstream module, identifying point representation as the dominant design variable tested. 
We further reevaluate vectorizations, kernels, learned representations, and Transformer-based models under one protocol with shared PD inputs, outer folds, and validation-only selection.
Among all evaluated PD methods and downstream configurations, PPE achieves the highest mean on most benchmarks, including TU8, ORBIT5K, and MCB-C.
Finally, we prove that grid averaging leaves a reconstruction error at any resolution,
whereas a Transformer with fixed random Fourier frequencies is almost surely universal.
PPE thus offers a minimal, Transformer-native baseline for persistence diagrams, and a common protocol for future comparison.
\end{abstract}

\section{Introduction}
\label{sec:intro}
How should a persistence diagram be presented to a Transformer? Persistent homology (PH) is a standard tool in topological data analysis (TDA)~\citep{carlsson2009topology} for characterizing the shape and structure of data across multiple scales, with applications spanning molecular property prediction, protein structure analysis, drug discovery, dynamical systems, materials science, and neuroscience (see Appendix~\ref{app:related} for related work). Its output is commonly represented as a persistence diagram (PD), a multiset of birth--death pairs that records when topological features appear and disappear.
A broad range of methods incorporate PDs into learning pipelines, including
fixed summaries---persistence images, landscapes, Betti curves, silhouettes~\citep{adams2017persistence,bubenik2015statistical,umeda2017betti,chazal2015silhouette};
kernels---sliced Wasserstein, scale-space, persistence-weighted Gaussian~\citep{carriere2017sliced,reininghaus2015stable,kusano2016pwg};
and learned representations---\perslay{}, RationalHat, ATOL~\citep{carriere2020perslay,hofer2019learning,royer2021atol}.
Yet for Transformers, which have become general-purpose backbones across modalities, the question above remains open.

The difficulty is that a Transformer does not consume a diagram. It consumes tokens. 
Using a diagram-level representation as a single token compresses the entire diagram before self-attention is applied, so attention never operates on individual topological features directly. Splitting such a representation into multiple tokens requires an additional choice of how those tokens should correspond to the underlying diagram. Treating each persistence point $(b,d)$ as a token is the natural point-level alternative, but raw two-dimensional coordinates are not necessarily an effective input for a Transformer. Using PDs with Transformers therefore involves a nontrivial choice of both tokenization and point representation.

Existing Transformer-based PD methods place their diagram-specific design at different stages.~\persformer{}~\citep{reinauer2021persformer} keeps standard scaled dot-product attention while adding a learned pointwise embedding, extra residual connections between self-attention layers, and attention pooling against a trainable query.~\xpert{}~\citep{kim2024xpert} pixelizes each diagram onto an $R\times R$ grid and partitions it into $P\times P$ patches, so the grid resolution and the patch size must be chosen together before a ViT-style encoder sees the tokens. The Multiset Transformer (\mst{})~\citep{multiset-transformer-2025} represents distinct points with explicit multiplicities, adds multiplicity-dependent terms to its attention weights, and uses multiset-specific equivariant blocks and invariant readout. Because each of these changes several parts of the representation, aggregation, or Transformer pipeline at once, a method-level comparison cannot tell which change is responsible for the reported gains (see Appendix~\ref{app:related}).

A second obstacle compounds the first. Reported results across the PD literature come from different PD inputs, data splits, model-selection procedures, and search budgets, so numbers from different papers are rarely comparable. This lack of comparability has been raised as a barrier to progress in topological machine learning \citep{barnes2021comparative,castillarieso2026standards}. As a concrete example, the released~\xpert{} protocol trains on all non-test samples and reports the maximum test accuracy across epochs. Under such conditions, even the basic question of whether a specialized architecture is needed at all cannot be settled from the literature.

We address both obstacles with a deliberately minimal design and a controlled evaluation. \emph{Persistence Positional Encoding} (\ours{}) confines its diagram-specific design to token construction. Each birth--death coordinate $(b,d)$ is mapped through fixed random Fourier features and combined with a learnable homology-dimension embedding, while the Transformer blocks, the attention rule, and the CLS readout stay unchanged. 
This separation lets us ask directly whether the bottleneck in applying Transformer to PDs lies in the representation of persistence points or in the architecture around them. In matched controls on the eight TU graph benchmarks, changing only the point representation shifts macro accuracy by 5.49 to 6.47 percentage points (pp), whereas changing only the downstream module shifts it by 0.47 to 1.45~pp (Table~\ref{tab:full_interface_controls}). The largest single-dataset gain from the point encoder is 19.5~pp, on DHFR. Among the components varied here, point representation is the dominant design variable.

Fourier feature mappings expose multiple frequency scales before the coordinates reach the downstream network~\citep{tancik2020fourier,mildenhall2020nerf}, and this gives the empirical finding a theoretical counterpart. We show that at every resolution, uniform grid averaging leaves a reconstruction error that no decoder can bring below a fixed positive bound (Theorem~\ref{thm:grid_capacity}). Information discarded at the grid cannot be recovered downstream, however expressive the model. A Transformer with fixed Fourier frequencies is instead almost surely a universal approximator for continuous functions of persistence diagrams, without prior knowledge of the target frequencies (Theorem~\ref{thm:ppe_fixed_encoder_universality}).

We reevaluate diagram-level vectorizations, kernel methods, learned PD representations, and Transformer-based PD models above on shared prepared PD inputs and outer folds, with model selection based only on validation data for every method. 
Under this protocol,~\ours{} achieves the highest TU8 macro across all evaluated method--downstream configurations and has the highest mean among the evaluated Transformer-based PD methods on all eleven benchmarks, including 90.68\% on ORBIT5K, 6.82~pp above the next-best 83.86\%.
This includes two larger-scale benchmarks not reported in prior PD--Transformer work, with 69.32 ROC-AUC on ogbg-molhiv (41{,}127 graphs under its official scaffold split) and 65.79 balanced accuracy on MCB-C (1{,}745 point-cloud samples), exceeding the next-best Transformer-based method by 1.05 and 9.46~pp, respectively.

One may ask whether PPE is more than Fourier features applied to PD coordinates. 
The feature map itself follows a standard construction~\citep{tancik2020fourier}.
Our contribution is not the feature map in isolation, but showing why this representation choice matters through the matched controls that attribute the gains to point representation rather than to architecture, the theory that explains why grid-based tokenization cannot close the gap, and the unified protocol under which Transformer-based, vectorization, kernel, and learned PD methods are compared on equal terms. 
Together, these turn a small encoder change into something the field can build on.~\ours{} gives persistence diagrams a Transformer-native interface that requires no PD-specific attention or pooling, so it drops into standard token-based pipelines as they are. The controls and theory tell practitioners which design variable deserves their effort, and the unified protocol offers a common ground on which future PD methods, Transformer-based or not, can be compared. We view this combination, rather than the encoder alone, as the contribution of this work.
Our main contributions are as follows:

\begin{itemize}[label={},leftmargin=1.2em]
\item \textbf{Method and findings.}
We introduce~\ours{}, which maps PD points to Fourier features for a standard Transformer. It achieves the highest mean among the evaluated Transformer-based PD methods on all eleven benchmarks, including two larger-scale evaluations, and shows more stable convergence (Figure~\ref{fig:orbit5k_encoder_convergence}). Matched TU8 controls show larger gains from changing the point encoding, 5.49--6.47~pp, than from changing the downstream module, 0.47--1.45~pp (Table~\ref{tab:full_interface_controls}).
\item \textbf{Theory.}
We prove a reconstruction lower bound for uniform grid averaging that holds at every resolution and for every decoder (Theorem~\ref{thm:grid_capacity}), and an almost-sure universal approximation result for the~\ours{} Transformer with Fourier frequencies sampled once and kept fixed (Theorem~\ref{thm:ppe_fixed_encoder_universality}).
\item \textbf{Unified evaluation.}
We reevaluate Transformer-based models, vectorization methods, kernels, and learned PD representations on shared PD inputs and outer folds with validation-based model selection (Tables~\ref{tab:graph_benchmarks} and~\ref{tab:tu8_method_summary}). We further extend the comparison beyond the benchmarks reported in prior PD--Transformer work to the 41{,}127-graph ogbg-molhiv and MCB-C.~\ours{} attains the highest TU8 macro accuracy among the evaluated method--downstream configurations (Table~\ref{tab:tu8_method_summary}).
\end{itemize}

\section{Method}
\label{sec:method}

\subsection{Preliminary}

A~\emph{persistence diagram} $D$ is a multiset of points $(b, d) \in \mathbb{R}^2$ where each point represents a topological feature with birth time $b$ and death time $d$ (with $b \leq d$). We write $|D|$ for its cardinality. In practice, we compute persistence diagrams for multiple homological dimensions.
Let $D^{(k)}$ denote the persistence diagram for dimension $k$ (Betti-$k$ features). In our experiments, we use $D^{(0)}$ for connected components (clusters) and $D^{(1)}$ for loops/cycles (holes). The model consumes the points of all homology dimensions used as a single sequence, so we index them jointly by $i = 1, \ldots, N$ with $N = \sum_k |D^{(k)}|$, writing $(b_i, d_i)$ for the $i$-th point and $k_i \in \{0, 1, \ldots\}$ for its homology dimension.

\subsection{Persistence Positional Encoding (\ours{})}

\ours{} treats a PD as an unordered token sequence rather than rasterizing it into an image or reducing it to pairwise kernel evaluations.
For each diagram point, we encode the raw $(b,d)$ coordinate with fixed random Fourier features, project the result to the Transformer width, and add a learnable homology-dimension embedding that marks whether the token comes from $H_0$, $H_1$, or another dimension.
The resulting variable-length token sequence is processed by an otherwise standard Transformer encoder with padding masks, CLS-token pooling, and a linear prediction head.
Figure~\ref{fig:architecture} summarizes this pipeline. The following subsections define each component.

\begin{figure}[t]
    \centering
    \includegraphics[width=\linewidth]{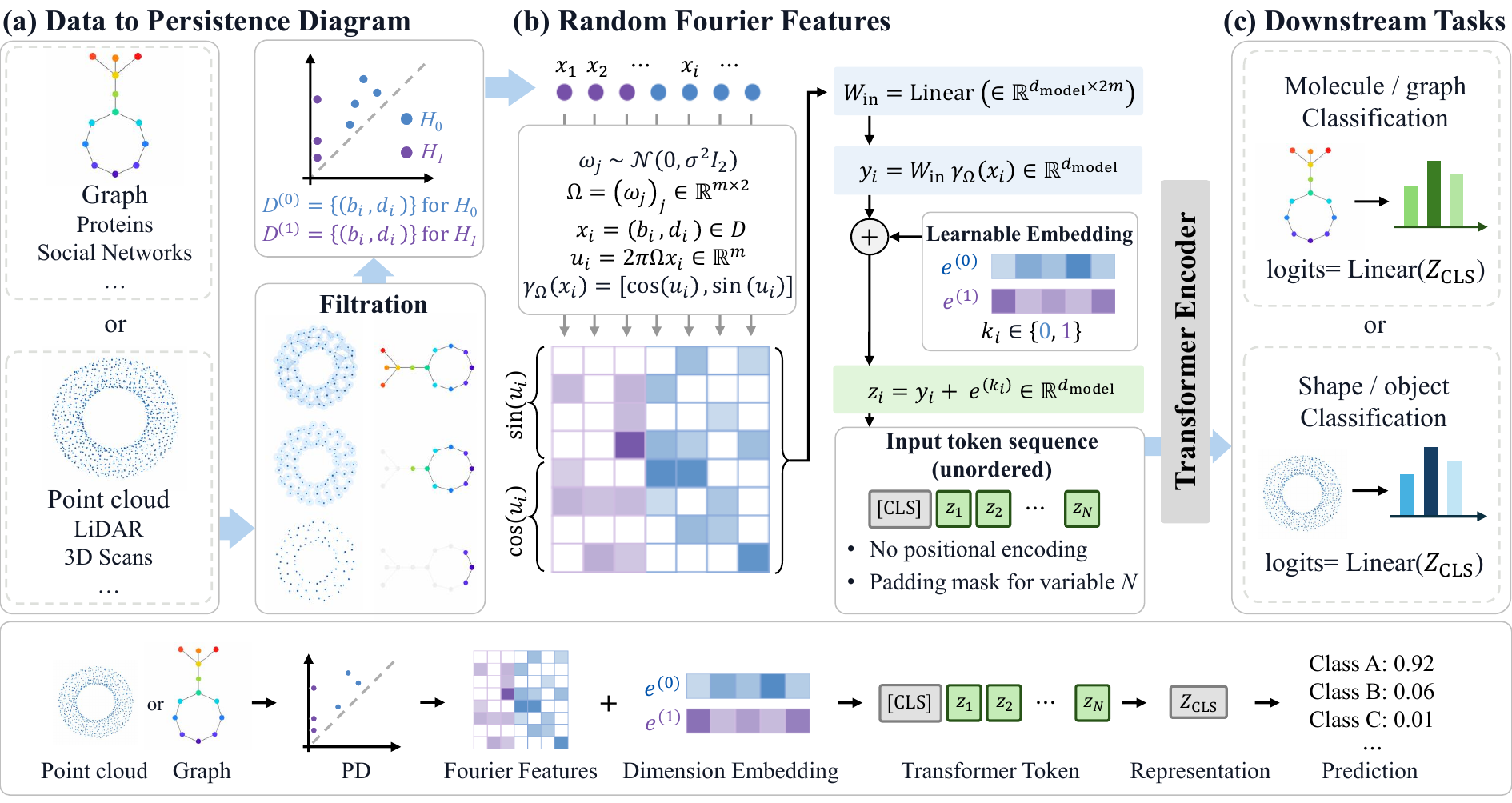}
    \caption{
    \textbf{A schematic overview of \ours{}.}
    \textbf{(a)} Filtration converts topological data, e.g., graphs, point clouds, or other structured domains, into persistence diagrams. 
    \textbf{(b)} Each PD point $x_i=(b_i,d_i)$ is encoded by fixed random Fourier features, linearly projected, and augmented with a learnable homology-dimension embedding ($H_0$ or $H_1$). The resulting unordered token sequence is processed by a Transformer encoder without positional encoding. 
    \textbf{(c)} The CLS representation drives downstream tasks, such as molecular/graph classification and shape/object classification. 
    }
    \label{fig:architecture}
\end{figure}

\subsubsection{Fourier Feature Encoding}

Following~\citet{tancik2020fourier}, we encode each coordinate $(b, d) \in \mathbb{R}^2$ using random Fourier features (RFF).
Let $\Omega \in \mathbb{R}^{m \times 2}$ have rows $\omega_1^\top, \dots, \omega_m^\top$, where the frequencies $\omega_j \in \mathbb{R}^2$ are sampled from $\mathcal{N}(0, \sigma^2 I_2)$ and fixed throughout training.
The RFF encoding maps a 2D coordinate to a $2m$-dimensional vector:
\begin{equation}
    \gamma_\Omega(x) =
    \bigl[
        \cos(2\pi \Omega x);\;
        \sin(2\pi \Omega x)
    \bigr] \in \mathbb{R}^{2m}
    \label{eq:rff}
\end{equation}
where $[\cdot\,;\,\cdot]$ concatenates the $m$-dimensional cosine and sine blocks.
This encoding keeps each PD point as a separate token and provides explicit multi-frequency features to mitigate the spectral bias that can hinder raw-coordinate MLPs from resolving fine-grained differences (Section~\ref{sec:encoder_ablation}).

\subsubsection{Homology-Dimension Embedding}

Each point $(b_i, d_i)$ is mapped to a token by linearly projecting its Fourier features to the model dimension $d_{\mathrm{model}}$ and adding a learnable embedding of its homology dimension $k_i$, with one embedding vector $e^{(k)}$ per homology dimension:
\begin{equation}
    z_i = W_{\text{in}}\,\gamma_\Omega(b_i, d_i) + e^{(k_i)},
    \qquad W_{\text{in}} \in \mathbb{R}^{d_{\mathrm{model}} \times 2m},\ e^{(k)} \in \mathbb{R}^{d_{\mathrm{model}}}.
\end{equation}

\subsubsection{Encoder, Head, and Training}

The token sequence $[z_1, \ldots, z_N]$ is prepended with a learnable [CLS] token~\citep{devlin2019bert} and passed through $L$ vanilla Transformer encoder layers~\citep{vaswani2017attention} with pre-LayerNorm~\citep{xiong2020layer} and GELU activations. Variable-length batches are padded to the maximum length per batch, and an attention mask zeroes out padded positions so they do not influence attention or downstream representations. Classification uses a single linear head over the LayerNorm-ed CLS output. We train with cross-entropy loss and AdamW~\citep{loshchilov2019decoupled} under a linear-warmup-then-cosine-decay schedule with early stopping on validation accuracy. Full hyperparameters appear in Appendix~\ref{app:architecture}.

\section{Theory}
\label{sec:theory}

We establish two complementary expressivity results. First, we show that a cell-average grid representation has an irreducible worst-case reconstruction error, regardless of decoder complexity. Second, we show that fixing the Fourier frequencies does not create an analogous limitation for \ours{}. With probability one, a two-layer Transformer with a fixed set of randomly sampled Fourier frequencies can universally approximate continuous functions on bounded persistence diagrams.
Here, \(C(X)\) denotes the space of real-valued continuous functions on compact \(X\subset\mathbb R^2\). 
Proofs of both theorems are provided in Appendix~\ref{app:Expressivity_and_Approximation}.

\subsection{Limitation of grid-based representations}
\label{sec:theory:grid}
By the Stone--Weierstrass theorem~\citep{stone1948generalized}, for any continuous function $f$ on a compact domain $X$ and any desired accuracy, there exists a finite trigonometric sum that achieves that accuracy. 
That is, choosing frequencies according to the target function yields arbitrarily accurate approximations of continuous functions. 
However, when a continuous function $f$ is represented by finite-resolution cell statistics, distinct within-cell variations can become indistinguishable before any downstream processing. The following theorem formalizes this limitation for a cell-average grid representation.

\begin{theorem}[Grid reconstruction lower bound]
\label{thm:grid_capacity}
Let $X=[0,c]^2$, and let $\{C_{ij}\}_{i,j=1}^R$ denote the uniform $R\times R$ partition of $X$. For any $f\in C(X)$, define the grid representation $E_{\mathrm{grid},R}(f)=\left(\frac{1}{|C_{ij}|}\int_{C_{ij}} f(x)\,dx\right)_{i,j=1}^{R}.$
Then, for every $R\in\mathbb N$ and every decoder
$\mathcal G\colon\mathbb R^{R^2}\to L^2(X)$,

$$
\sup_{\substack{f\in C(X)\\ \|f\|_{\infty,X}\le 1}}
\left\|
f-\mathcal G\!\left(E_{\mathrm{grid},R}(f)\right)
\right\|_{L^2(X)}
\ge
\frac{c}{2}.
$$
\end{theorem}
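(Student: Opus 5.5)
The plan is a standard two-point (indistinguishability) argument. We find two continuous functions $f_+$ and $f_-$ with $\|f_\pm\|_{\infty,X}\le 1$ that have identical grid representations, $E_{\mathrm{grid},R}(f_+)=E_{\mathrm{grid},R}(f_-)$, yet are far apart in $L^2(X)$. Any decoder $\mathcal G$ then returns the same output $g$ for both. The triangle inequality gives
\[
\|f_+-f_-\|_{L^2(X)}\le \|f_+-g\|_{L^2(X)}+\|f_--g\|_{L^2(X)},
\]
so one of the two reconstruction errors is at least $\tfrac12\|f_+-f_-\|_{L^2(X)}$. Consequently it suffices to produce a pair with $\|f_+-f_-\|_{L^2(X)}\ge c$, or slightly less together with a limiting argument.

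The natural choice is $f_\pm=\pm h$, where $h$ has zero average on every cell $C_{ij}$ and $|h|\le 1$. Then $E_{\mathrm{grid},R}(\pm h)=0$ and $\|f_+-f_-\|_{L^2(X)}=2\|h\|_{L^2(X)}$, so we need $\|h\|_{L^2(X)}$ close to $c/2$. Note that $|X|=c^2$, hence $\|h\|_{L^2(X)}\le c$. For instance, $h(x,y)=\sin(2\pi R x/c)$ has zero mean on each cell, because each cell contains exactly one full period in $x$. Its $L^2$ norm is $c/\sqrt2$, which exceeds $c/2$. This would give the stronger bound $c/\sqrt 2$, and in particular the stated $c/2$.

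Concretely, the steps are as follows.
\begin{enumerate}
\item Fix $R$ and define $h_R(x,y)=\sin(2\pi R x/c)$. This function is continuous with $\|h_R\|_{\infty,X}\le 1$.
\item Verify that $\int_{C_{ij}}h_R=0$. Each cell is $[(i-1)c/R,\,ic/R]\times[(j-1)c/R,\,jc/R]$, and the $x$-integral of $\sin$ over one full period vanishes.
\item Compute $\|h_R\|_{L^2(X)}^2=c\cdot\int_0^c\sin^2(2\pi R x/c)\,dx=c^2/2$.
\item Apply the two-point argument to $\pm h_R$. For any $g=\mathcal G(0)$, we get $\max_\pm\|\pm h_R-g\|_{L^2(X)}\ge \|h_R\|_{L^2(X)}=c/\sqrt2\ge c/2$.
\end{enumerate}
The argument does not even require measurability or any other structure on $\mathcal G$, since only its value at the single point $0$ is used.

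Nothing here is really an obstacle. The only delicate points are bookkeeping. First, the integral must be taken over exactly one period per cell; choosing frequency $R$, or any positive integer multiple of it, guarantees this. Second, one must check that the supremum ranges over continuous functions, which $h_R$ is. If one preferred to match the constant $c/2$ exactly, a sign-alternating function on half-cells would also work, but it would need a continuous mollification and a limit, so the sine construction is cleaner.
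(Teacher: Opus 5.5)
Your proof is correct and follows the same route as the paper. Both use a two-point indistinguishability argument with trigonometric functions having exactly one period per cell side, so every cell average vanishes, the decoder returns the same $g=\mathcal G(0)$ for both functions, and the triangle inequality finishes the argument. The only difference is the choice of pair: the paper uses the orthogonal functions $\cos(2\pi x_1/\delta)$ and $\cos(2\pi x_2/\delta)$ with $\delta=c/R$, whose separation $\|f_1-f_2\|_{L^2(X)}=c$ gives exactly $c/2$, whereas your antipodal pair $\pm h_R$ is separated by $\sqrt2\,c$ and gives the sharper constant $c/\sqrt2$. Your closing aside is slightly off, since a mollified half-cell square wave would push the bound toward $c$ rather than match $c/2$, but nothing in the proof depends on it.
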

Increasing the complexity of the decoder cannot recover information already lost by the fixed grid.

\subsection{Universal approximation with a fixed Fourier encoder}
\label{sec:theory:realization}
Fourier approximation allows the frequencies to be chosen according to the target function and desired accuracy. 
In the default \ours{} setup, however, the frequencies in \(\Omega\) are sampled once and fixed without access to the target function, but can also be made learnable as a straightforward extension (Table~\ref{tab:learned_frequency_ablation}).
We therefore ask whether fixing \(\Omega\) limits expressivity. The next result shows that a fixed set of randomly sampled frequencies, when followed by the Transformer, suffices for universal approximation of continuous diagram-level functions.

Fix \(c>0\) and \(q\in\mathbb N\). Let \(\mathcal H_c=[0,c]^2\cap\{b\le d\}\), and let \(\mathcal D_c\) be the set of all multisets of at most \(q\) points in \(\mathcal H_c\). The empty diagram is included and diagonal points are retained. Convergence requires eventual equality of cardinalities and convergence of the matched points.

\begin{theorem}[Universal approximation with a fixed Fourier encoder]
\label{thm:ppe_fixed_encoder_universality}
Let $\Omega$ have at least three rows sampled i.i.d. from a
nondegenerate Gaussian distribution on $\mathbb R^2$. 
For almost every \(\Omega\), for every continuous function \(F\colon\mathcal D_c\to\mathbb R\) and every \(\varepsilon>0\), there exists a two-layer \ours{} Transformer \(T_{\theta,\Omega}\) with finite FFN widths and trainable parameters \(\theta\) such that
$$
\sup_{D\in\mathcal D_c}
\left|T_{\theta,\Omega}(D)-F(D)\right|
<
\varepsilon.
$$
\end{theorem}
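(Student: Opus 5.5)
The plan is to reduce the statement to a symmetric-function (Deep Sets / Stone--Weierstrass) argument on multisets, with the Fourier encoder serving only as an injective continuous feature map. The Transformer's job is then to realize a mean-pooling of a learned pointwise feature, together with a cardinality signal, followed by an MLP readout.

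\textbf{Step 1 (almost-sure injectivity of the fixed encoder).} With at least three Gaussian rows, I will show that $\gamma_\Omega$ is injective on $\mathcal H_c$ for almost every $\Omega$. Suppose $\gamma_\Omega(x)=\gamma_\Omega(y)$. Then $\omega_j^\top(x-y)\in\mathbb Z$ for all $j$. Two rows that are almost surely linearly independent confine $x-y$ to a lattice $\Lambda_\Omega$. A third row $\omega_3$, independent of $\omega_1,\omega_2$, almost surely takes non-integer values on every nonzero point of $\Lambda_\Omega$ within the bounded set $\mathcal H_c-\mathcal H_c$. That set meets $\Lambda_\Omega$ in finitely many points, and for each one the bad event for $\omega_3$ is a countable union of hyperplanes, hence of measure zero. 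So $\gamma_\Omega$ is a continuous injection of a compact set, hence a homeomorphism onto its image $K_\Omega$. Consequently any continuous $\phi$ on $\mathcal H_c$ factors as $\psi\circ\gamma_\Omega$ with $\psi\in C(K_\Omega)$, extended by Tietze to $\mathbb R^{2m}$.

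\textbf{Step 2 (symmetric reduction).} I will use the standard fact that the moment-type map $D\mapsto\bigl(|D|,\ \frac1{|D|}\sum_{x\in D}\phi_\ell(x)\bigr)_\ell$ separates points of $\mathcal D_c$, with the empty diagram handled via $|D|=0$. Here the $\phi_\ell$ are finitely many continuous functions, for example monomials up to degree $q$ in $(b,d)$. On each cardinality stratum $\{|D|=n\}$, the averages of monomials of degree $\le n$ determine the multiset. Since cardinality is eventually constant under the stated topology, each stratum is open-closed and compact. The map is continuous and injective on the compact space $\mathcal D_c$, so $F=G\circ\Phi$ for some continuous $G$ on the compact image. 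Using averages rather than sums matches softmax attention.

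\textbf{Step 3 (realization by two layers).} In layer 1, the pointwise FFN applied to tokens $z_i=W_{\rm in}\gamma_\Omega(x_i)+e^{(k_i)}$ approximates $\psi_\ell$ uniformly on the compact $K_\Omega$ by the universal approximation theorem for MLPs, producing $\phi_\ell(x_i)$ in reserved coordinates. The residual stream keeps a coordinate that distinguishes CLS from points. In layer 2, the CLS query uses zero query/key weights, so attention is uniform over CLS plus the $N$ points. The CLS output is then $\frac{1}{N+1}\bigl(v_{\rm cls}+\sum_i v_i\bigr)$. Taking $v_{\rm cls}=e$ and $v_i=(\phi_\ell(x_i),0)$ yields $1/(N+1)$, which encodes $N$, and $\frac{N}{N+1}\overline{\phi_\ell}$, from which the means are recovered continuously. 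The layer-2 FFN on CLS then approximates $G$ composed with this continuous decoding, and the linear head reads it out. LayerNorm will be neutralized by working in a regime where it acts as a fixed affine map on the relevant coordinates, for example by adding a large constant direction, or by absorbing it since it is a continuous injective-enough map on a compact set.

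\textbf{Main obstacle.} I expect the hardest step to be the almost-sure injectivity of Step 1, in particular making the lattice argument rigorous, including coincident rows and the role of the factor $2\pi$. Handling pre-LayerNorm and padding precisely in Step 3 is secondary but fiddly. If LayerNorm causes trouble, I would first try to append constant dummy coordinates, which keep the normalization continuous and injective on the compact token set. The errors from approximating $\psi_\ell$ and $G$ are then combined by uniform continuity of $G$ on a compact neighborhood.
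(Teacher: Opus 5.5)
Your proposal is correct and follows essentially the same route as the paper's proof: almost-sure injectivity from the first three Fourier pairs via the lattice $A^{-1}\mathbb Z^2$ and the third row, an injective moment map on $\mathcal D_c$ (the paper uses complex power sums of $\zeta=b+\mathrm i d$ with Newton's identities rather than real monomials), a first layer with zero attention whose FFN writes pointwise moment features, and a second layer with zero query and key weights whose uniform average over CLS and the points gives $M(D)/(N+1)$, from which $N$ and $M(D)$ are decoded before the final FFN and head. The one point to tighten is LayerNorm on the attention value path and before the linear head: because the values are averaged linearly and the head is linear, you need LayerNorm to act as a known linear map on the vectors you write, and injectivity alone is not enough. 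The paper gets this exactly by writing every such vector as $J(t)=(t,-t,1,-1,\sqrt{r_0^2-\|t\|_2^2},-\sqrt{r_0^2-\|t\|_2^2},0,\dots,0)$, which has zero mean and constant norm, so LayerNorm is a fixed scaling on it; your large-constant-direction option achieves the same up to an error that vanishes as the constant grows.
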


\section{Experiments}
\label{sec:experiments}

\subsection{Experimental setup}
\label{sec:setup}

\smallskip
\noindent\textbf{Datasets.}
We evaluate~\ours{} on ORBIT5K, standard TUDataset graph-classification benchmarks used by prior persistence-diagram baselines, and two larger-scale benchmarks, ogbg-molhiv and MCB-C.

\begin{itemize}[label={},leftmargin=1.2em]
\item\textit{ORBIT5K.} The standard point-cloud benchmark for neural TDA methods~\citep{carriere2020perslay}, with 5{,}000 linked twist map orbits (5 classes, $r \in \{2.5, 3.5, 4.0, 4.1, 4.3\}$, 1{,}000 points per orbit). Prior-work numbers use their original protocols. Our reproduced results use the unified 10-fold train/validation/test protocol described below.

\item\textit{TU graph benchmarks.} We evaluate on standard TUDataset benchmarks~\citep{morris2020tudataset}, namely MUTAG, PROTEINS, COX2, DHFR, IMDB-B, IMDB-M, NCI1, and NCI109. These cover molecular graphs and social interaction graphs. We use class labels only for supervised evaluation, and node attributes are not used in the reproduced PD-only comparison.

\item\textit{Larger-scale benchmarks.} We additionally evaluate frozen model configurations on ogbg-molhiv \citep{hu2020ogb} and MCB-C~\citep{graf2025flood}. The ogbg-molhiv benchmark contains 41{,}127 molecular graphs and uses the official scaffold split with ROC-AUC. MCB-C contains 1{,}745 high-resolution point clouds and uses the ten released folds with balanced accuracy. We deterministically sample 20{,}000 input points per MCB-C example before computing persistence.
\end{itemize}

\smallskip 
\noindent\textbf{Persistence diagram computation.}
For point cloud data (ORBIT, linked twist), we compute persistence diagrams via Alpha complex filtration using \texttt{GUDHI}~\citep{gudhi2015}. For graph data, we follow~\perslay{}~\citep{carriere2020perslay} and use Heat Kernel Signature (HKS) sublevel persistence~\citep{sun2009hks}: for each graph $G=(V,E)$ we compute the normalized Laplacian $L_{\text{graph}} = I - D_{\deg}^{-1/2}A_{\text{adj}}D_{\deg}^{-1/2}$, derive HKS values $\text{hks}(v) = \sum_k e^{-t\lambda_k}\psi_k(v)^2$ at diffusion time $t=10$, and use these as the filtration function. 
Here $D_{\deg}$ and $A_{\text{adj}}$ denote degree and adjacency matrices, and $(\lambda_k,\psi_k)$ are Laplacian eigenpairs of $L_{\text{graph}}$.
In both cases, infinite-death points are removed and coordinates are used as raw $(b, d)$ pairs. The comparison in Table~\ref{tab:graph_benchmarks}(a) uses ordinary $H_0$ and $H_1$ persistence. The larger-scale comparison in Table~\ref{tab:graph_benchmarks}(b) uses ordinary $H_0$ and $H_1$ persistence for ogbg-molhiv and Alpha persistence in $H_0$, $H_1$, and $H_2$ for MCB-C.

\smallskip
\noindent\textbf{Baselines.}
All baseline methods are PD-to-prediction architectures, so a fair comparison requires holding the persistence diagram itself constant. Prior papers each use different filtration choices, persistence variants, coordinate conventions, and evaluation splits, which makes their reported numbers difficult to compare on equal footing. We therefore re-implement Transformer-based PD methods inside our unified pipeline as our primary baselines, namely~\xpert{},~\persformer{}, and~\mst{}. 
The reproduced model baselines consume the same ordinary-persistence PDs (Alpha complex on ORBIT, HKS sublevel filtration at $t{=}10$ on TU graphs, both via GUDHI), go through identical data loading, and are evaluated under the same stratified 10-fold cross-validation protocol described in Section~\ref{sec:experiments}. For the \mst{} reproduction, we use ordinary persistence diagrams without the optional DBSCAN clustering step; points with identical coordinates are still counted by multiplicity in the multiset attention. Per-method hyperparameters are tuned on the validation split via random grid search. The evaluated architectural choices and search ranges are reported in Appendix~\ref{app:grid_search}.

\smallskip
\noindent\textbf{\ours{} configuration.}
Architecture and training defaults are listed in Appendix~\ref{app:architecture} (Table~\ref{tab:hyperparams}). Per-dataset learning rate, weight decay, and dropout are tuned via random grid search on the validation split, while batch size, warmup, total epochs, and early-stopping patience are fixed per dataset family (full protocol in~\Aref{app:grid_search}).

\smallskip
\noindent\textbf{Evaluation protocol.}
Prior work on PD classification evaluates on PDs computed under different filtration settings, so observed accuracy gaps between methods reflect a mixture of representation-learning differences and input differences, and a stronger filtration paired with a weaker model can outperform the reverse. Without controlling the input, no comparison can isolate the contribution of the architecture itself. Compounding this, evaluation protocols themselves vary across prior work and often lack an independent held-out test set after model and hyperparameter selection, allowing test-set-informed model selection to inflate reported numbers. Motivated by recent calls for transparent and comparable experimental standards in topological machine learning~\citep{barnes2021comparative,castillarieso2026standards}, we therefore unify both inputs (identical PDs across all methods) and protocol.

Unless otherwise noted, we use stratified 10-fold cross-validation with an explicit train/validation/test split per fold ($\sim$$80/10/10$). Train, validation, and test metrics are recorded at each epoch. Hyperparameter and checkpoint selection use only validation accuracy, and we report the test metric at the validation-selected epoch. Each neural configuration in the unified sweep is run as a full 5-seed $\times$ 10-fold sweep, and the configuration with the highest mean validation accuracy is reported. All random seeds used for data splitting, parameter initialization, and minibatch shuffling are fixed in the released configurations.

\subsection{Benchmark comparison}
\label{sec:results}

\begin{table}[t]
\centering
\caption{\textbf{Benchmark and controlled-interface results.} (a) reports the unified Transformer-based comparison, (b) additional larger-scale evaluations, and (c) a matched $2\times2$ control, whose $\Delta$ column reports the signed difference (Transformer aggregation minus pooled MLP aggregation) in percentage points. Gray shading and \textbf{bold} mark the highest mean, while \underline{underline} marks the second-highest mean within each comparison.}
\label{tab:graph_benchmarks}
\scriptsize
\setlength{\tabcolsep}{2.4pt}
\resizebox{\linewidth}{!}{%
\begin{tabular}{@{}lcccccccccc@{}}
\toprule
\multicolumn{11}{@{}l}{\textbf{(a) Transformer-based comparison}} \\
\cmidrule(r){1-1}\cmidrule(lr){2-2}\cmidrule(l){3-11}
\textbf{Method} & \textbf{ORBIT5K} & \textbf{MUTAG} & \textbf{PROTEINS} & \textbf{COX2} & \textbf{DHFR} & \textbf{IMDB-B} & \textbf{IMDB-M} & \textbf{NCI1} & \textbf{NCI109} & \textbf{TU8} \\
\cmidrule(r){1-1}\cmidrule(lr){2-2}\cmidrule(l){3-11}
\mst{} & 81.56{\tiny$\pm$5.53} & 78.99{\tiny$\pm$7.90} & 68.10{\tiny$\pm$2.89} & 77.53{\tiny$\pm$1.56} & 60.34{\tiny$\pm$3.53} & \underline{67.78{\tiny$\pm$4.34}} & 39.28{\tiny$\pm$3.09} & 65.44{\tiny$\pm$1.28} & 59.97{\tiny$\pm$1.29} & 64.68 \\
\persformer{} & 76.90{\tiny$\pm$13.58}$^{*}$ & 82.48{\tiny$\pm$6.23} & 71.23{\tiny$\pm$3.18} & \underline{78.16{\tiny$\pm$0.58}} & 60.83{\tiny$\pm$0.65} & 67.44{\tiny$\pm$2.75} & 39.63{\tiny$\pm$3.73} & 64.10{\tiny$\pm$3.21} & 64.64{\tiny$\pm$1.66} & 66.06 \\
\xpert{} & \underline{83.86{\tiny$\pm$1.24}} & \underline{83.38{\tiny$\pm$5.75}} & \underline{72.29{\tiny$\pm$3.59}} & 77.51{\tiny$\pm$3.05} & \underline{75.80{\tiny$\pm$3.27}} & 66.48{\tiny$\pm$4.13} & \underline{46.15{\tiny$\pm$3.24}} & \underline{66.36{\tiny$\pm$1.14}} & \underline{66.56{\tiny$\pm$1.02}} & \underline{69.32} \\
\ours{} (Ours) & \cellcolor{gray!15}\textbf{90.68{\tiny$\pm$0.72}} & \cellcolor{gray!15}\textbf{87.44{\tiny$\pm$3.40}} & \cellcolor{gray!15}\textbf{72.96{\tiny$\pm$4.61}} & \cellcolor{gray!15}\textbf{78.70{\tiny$\pm$3.47}} & \cellcolor{gray!15}\textbf{80.03{\tiny$\pm$3.18}} & \cellcolor{gray!15}\textbf{70.24{\tiny$\pm$3.33}} & \cellcolor{gray!15}\textbf{46.60{\tiny$\pm$3.45}} & \cellcolor{gray!15}\textbf{75.66{\tiny$\pm$1.47}} & \cellcolor{gray!15}\textbf{74.09{\tiny$\pm$1.66}} & \cellcolor{gray!15}\textbf{73.22} \\
\bottomrule
\end{tabular}
}
\vspace{0.4mm}
\noindent\parbox{\linewidth}{\raggedright\scriptsize$^{*}$\persformer{} trains very unstably on ORBIT5K. \citet{kim2024xpert} reproduce it with the authors' official code and report the model as ``not trainable'' on ORBIT5K. Figure~\ref{fig:encoder_ablation}(\subref{fig:orbit5k_encoder_convergence}) shows the validation curves.}
\scriptsize
\vspace{1.4mm}
\begin{minipage}[t]{0.49\linewidth}
\centering
\setlength{\tabcolsep}{2.4pt}
\begin{tabular*}{\linewidth}{@{\extracolsep{\fill}}lcc@{}}
\multicolumn{3}{@{}l}{\textbf{(b) Larger-scale benchmarks}} \\
\toprule
\textbf{Method} & \makecell{\textbf{ogbg-molhiv}\\[-1pt]\textbf{ROC-AUC}} & \makecell{\textbf{MCB-C}\\[-1pt]\textbf{Balanced acc.}} \\
\midrule
\mst{} & 63.13{\tiny$\pm$2.72} & 49.30{\tiny$\pm$5.64} \\
\persformer{} & 63.00{\tiny$\pm$0.47} & 14.69{\tiny$\pm$10.66} \\
\xpert{} & \underline{68.27{\tiny$\pm$0.71}} & \underline{56.33{\tiny$\pm$4.28}} \\
\ours{} (Ours) & \cellcolor{gray!15}\textbf{69.32{\tiny$\pm$1.51}} & \cellcolor{gray!15}\textbf{65.79{\tiny$\pm$4.27}} \\
\bottomrule
\end{tabular*}
\end{minipage}%
\hfill
\begin{minipage}[t]{0.49\linewidth}
\centering
\setlength{\tabcolsep}{2.4pt}
\begin{tabular*}{\linewidth}{@{\extracolsep{\fill}}lccc@{}}
\multicolumn{4}{@{}l}{\textbf{(c) Matched representation $\times$ backbone control (TU8)}} \\
\toprule
\textbf{Representation} & \makecell{\textbf{Pooled MLP}\\[-1pt]\textbf{aggregation}} & \makecell{\textbf{Transformer}\\[-1pt]\textbf{aggregation}} & $\boldsymbol{\Delta}$ \\
\midrule
\addlinespace[1.5mm]
Persistence image (PI) & \underline{64.41} & \underline{64.97} & +0.56 \\
\addlinespace[1mm]
Random Fourier feature (RFF) & \cellcolor{gray!15}\textbf{72.16} & \cellcolor{gray!15}\textbf{72.63} & +0.47 \\
\addlinespace[1mm]
RFF $-$ PI & +7.75 & +7.66 &  \\
\addlinespace[0.5mm]
\bottomrule
\end{tabular*}
\end{minipage}
\end{table}

\smallskip 
\noindent\textbf{Classification benchmarks.}
Table~\ref{tab:graph_benchmarks}(a) compares Transformer-based PD methods on ORBIT5K and eight TU graph benchmarks.
\ours{} achieves the highest mean on all nine benchmarks, reaching 90.68\% on ORBIT5K and a TU8 macro accuracy of 73.22\%.
The gains over the strongest reproduced Transformer baseline are 6.82~pp on ORBIT5K and 3.90~pp in TU8 macro accuracy, with particularly large improvements of 9.30 and 7.53~pp on NCI1 and NCI109, respectively.
\mbox{Table~\ref{tab:graph_benchmarks}(b)} shows that \ours{} also achieves the highest mean among the evaluated Transformer-based PD methods on the two larger-scale benchmarks ogbg-molhiv and MCB-C under the separate frozen-transfer protocols. On MCB-C, the margin over the strongest evaluated Transformer-based PD baseline reaches 9.46~pp in balanced accuracy, with no target-specific hyperparameter search.
In the matched control (Table~\ref{tab:graph_benchmarks}(c), Table~\ref{tab:pi_transformer_dataset_control}), replacing PIs with fixed RFF raises TU8 macro accuracy from 64.41 to 72.16 with pooled MLP aggregation and from 64.97 to 72.63 with Transformer aggregation. With the RFF tokenizer fixed, pooled MLP and Transformer aggregation reach TU8 macro accuracies of 72.16 and 72.63, respectively.
The representation change therefore improves accuracy by 7.66 to 7.75~pp, while switching from pooled MLP to Transformer aggregation yields much smaller gains of 0.56~pp with PIs and 0.47~pp with RFF. The gain thus comes mostly from the point representation rather than from the downstream model.

\ours{} achieves the highest TU8 macro accuracy of 73.22 among the evaluated method--downstream configurations, spanning Transformer-based PD methods, learned representations, fixed summaries, and persistence kernels (Table~\ref{tab:tu8_method_summary}).
This result demonstrates that fixed RFF point tokens paired with a standard Transformer provide a simple and effective architecture for PD classification.
The comparison uses shared PD inputs and outer folds, with model selection based only on validation data.
Per-dataset results and downstream configurations are provided in Table~\ref{tab:fixed_vector_full} (Appendix~\ref{app:additional_results}).

\begin{table}[!htbp]
\centering
\caption{\textbf{TU8 comparison across PD method families.} Each row is a method--downstream configuration. TU8 is the unweighted mean over the eight graph datasets. Gray shading and \textbf{bold} mark the highest mean, and \underline{underline} marks the second-highest mean.}
\label{tab:tu8_method_summary}
\scriptsize
\renewcommand{\arraystretch}{1.06}
\begin{minipage}[t]{0.49\linewidth}
\centering
\setlength{\tabcolsep}{2.2pt}
\begin{tabular*}{\linewidth}{@{\extracolsep{\fill}}llr@{}}
\toprule
\textbf{Method} & \textbf{Learning model} & \textbf{TU8} \\
\midrule
\multicolumn{3}{@{}l}{\textbf{Transformer-based PD methods}} \\
\ours{} & Transformer & \cellcolor{gray!15}\textbf{73.22} \\
\xpert{} & Transformer & 69.32 \\
\persformer{} & Transformer & 66.06 \\
\mst{} & Transformer & 64.68 \\
\addlinespace[0.5mm]
\multicolumn{3}{@{}l}{\textbf{Learned PD representations}} \\
ATOL & Random forest & \underline{72.56} \\
\perslay{}$^\star$ & Learned invariant layer & 71.10 \\
RationalHat & MLP & 68.80 \\
RationalHat & Graph MLP & 64.42 \\
\bottomrule
\end{tabular*}
\end{minipage}%
\hfill
\begin{minipage}[t]{0.49\linewidth}
\centering
\setlength{\tabcolsep}{2.2pt}
\begin{tabular*}{\linewidth}{@{\extracolsep{\fill}}llr@{}}
\toprule
\textbf{Method} & \textbf{Learning model} & \textbf{TU8} \\
\midrule
\multicolumn{3}{@{}l}{\textbf{Fixed PD summaries}} \\
Persistence image & Random forest & 71.67 \\
Persistence landscape & Random forest & 71.17 \\
Betti curve & Random forest & 71.01 \\
Persistence silhouette & Random forest & 68.51 \\
\addlinespace[3.5mm]
\multicolumn{3}{@{}l}{\textbf{Persistence kernels}} \\
Sliced Wasserstein & SVM & 71.34 \\
Persistence scale-space & SVM & 69.56 \\
Persistence-weighted Gaussian & SVM & 68.23 \\
\bottomrule
\end{tabular*}
\end{minipage}
\vspace{0.6mm}
\parbox{0.97\linewidth}{\raggedright$^\star$ We evaluate Image, Line/Permutation-Equivariant, Landscape, and Exponential variants under the same protocol and report the variant selected using validation data. Image is selected on all datasets. Full variant-wise results are reported in Table~\ref{tab:perslay_variants}.}
\end{table}

\begin{table}[tbp]
\centering
\caption{\textbf{Point-representation and downstream-module controls.} Accuracy (\%) under the matched TU8 protocol. Shared configurations are repeated across controls so that each comparison is self-contained. TU8 is the unweighted mean over the eight graph datasets. Gray shading and \textbf{bold} mark the highest mean, while \underline{underline} marks the second-highest mean within each control.}
\label{tab:full_interface_controls}
\scriptsize \setlength{\tabcolsep}{1.8pt}
\resizebox{\linewidth}{!}{%
\begin{tabular}{@{}lllccccccccc@{}}
\toprule
\makecell[l]{\textbf{Varied}\\[-1pt]\textbf{factor}} & \makecell[l]{\textbf{Point}\\[-1pt]\textbf{representation}} & \makecell[l]{\textbf{Downstream}\\[-1pt]\textbf{module}} & \textbf{MUTAG} & \textbf{PROTEINS} & \textbf{COX2} & \textbf{DHFR} & \textbf{IMDB-B} & \textbf{IMDB-M} & \textbf{NCI1} & \textbf{NCI109} & \textbf{TU8} \\
\midrule
\multirow{4}{*}{Both} & Point MLP & Pooled MLP (mean+sum) & 81.38 & \cellcolor{gray!15}\textbf{72.58} & 78.08 & 60.32 & 64.28 & 43.56 & 62.46 & 62.84 & 65.69 \\
& Point MLP & Transformer & 82.69 & 71.48 & \underline{78.16} & 60.64 & 68.64 & 45.68 & 64.56 & 65.29 & 67.14 \\
& Fixed RFF & Pooled MLP (mean+sum) & \cellcolor{gray!15}\textbf{85.74} & 71.61 & 77.46 & \cellcolor{gray!15}\textbf{79.85} & \underline{69.00} & \underline{46.43} & \underline{73.64} & \underline{73.58} & \underline{72.16} \\
& Fixed RFF & Transformer & \underline{85.38} & \underline{71.72} & \cellcolor{gray!15}\textbf{78.30} & \underline{79.56} & \cellcolor{gray!15}\textbf{69.88} & \cellcolor{gray!15}\textbf{46.47} & \cellcolor{gray!15}\textbf{75.67} & \cellcolor{gray!15}\textbf{74.07} & \cellcolor{gray!15}\textbf{72.63} \\
\midrule
\multirow{2}{*}{\makecell[l]{Point\\representation}} & Raw coordinates & DeepSets & \underline{80.43} & \underline{71.82} & \cellcolor{gray!15}\textbf{78.16} & \underline{60.85} & \underline{63.84} & \underline{39.45} & \underline{62.41} & \underline{62.42} & \underline{64.92} \\
& Fixed RFF & DeepSets & \cellcolor{gray!15}\textbf{86.08} & \cellcolor{gray!15}\textbf{72.09} & \underline{77.55} & \cellcolor{gray!15}\textbf{79.37} & \cellcolor{gray!15}\textbf{68.56} & \cellcolor{gray!15}\textbf{46.67} & \cellcolor{gray!15}\textbf{74.08} & \cellcolor{gray!15}\textbf{73.22} & \cellcolor{gray!15}\textbf{72.20} \\
\midrule
\multirow{5}{*}{\makecell[l]{Downstream\\module}} & Fixed RFF & Pooled MLP (mean) & \cellcolor{gray!15}\textbf{86.80} & 70.15 & 76.72 & 78.82 & 68.26 & 46.19 & 69.44 & 68.98 & 70.67 \\
& Fixed RFF & Pooled MLP (sum) & \underline{86.67} & 70.20 & 77.37 & \cellcolor{gray!15}\textbf{80.49} & 68.04 & \underline{46.48} & 73.30 & 73.02 & 71.95 \\
& Fixed RFF & DeepSets & 86.08 & \cellcolor{gray!15}\textbf{72.09} & 77.55 & 79.37 & 68.56 & \cellcolor{gray!15}\textbf{46.67} & \underline{74.08} & \underline{73.22} & \underline{72.20} \\
& Fixed RFF & Learned-query pooling & 85.06 & 70.15 & \underline{77.62} & 79.27 & \underline{69.40} & 45.97 & 72.15 & 71.61 & 71.40 \\
& Fixed RFF & Transformer & 85.38 & \underline{71.72} & \cellcolor{gray!15}\textbf{78.30} & \underline{79.56} & \cellcolor{gray!15}\textbf{69.88} & 46.47 & \cellcolor{gray!15}\textbf{75.67} & \cellcolor{gray!15}\textbf{74.07} & \cellcolor{gray!15}\textbf{72.63} \\
\bottomrule
\end{tabular}
}
\end{table}

\vspace{-2mm}
\subsection{Control Analyses and Validation Behavior}
\label{sec:encoder_ablation}

The matched control in Table~\ref{tab:graph_benchmarks}(c) compares an image-based representation with pointwise features. 
Here we extend this comparison to other pointwise encoders and ask how accuracy varies across downstream modules with the encoder held fixed as shown in Table~\ref{tab:full_interface_controls}.
Point MLP is a two-layer encoder over raw $(b,d)$ coordinates, while the raw-coordinate representation uses only the direct coordinate projection. Pooled MLP applies a two-layer MLP after permutation-invariant pooling, using sum and mean together in the factorial control and one of the two alone in the downstream control. The remaining downstream modules are DeepSets~\citep{zaheer2017deepsets}, learned-query multi-head pooling, and the~\ours{} Transformer with CLS readout. Within each matched control, the homology-dimension embedding, classifier head, optimizer, and data pipeline are fixed.

\begin{figure}[tbp]
\centering
\begin{subfigure}[b]{0.40\linewidth}
\centering
\footnotesize
\setlength{\tabcolsep}{4.0pt}
\begin{tabular}{lccc}
\toprule
\textbf{Dataset} & \textbf{Fourier} & \textbf{MLP} & $\boldsymbol{\Delta}$ \\
\midrule
MUTAG & \textbf{85.38} & 82.69 & +2.69 \\
PROTEINS & \textbf{71.72} & 71.48 & +0.24 \\
COX2 & \textbf{78.30} & 78.16 & +0.14 \\
DHFR & \textbf{79.56} & 60.64 & +18.92 \\
IMDB-B & \textbf{69.88} & 68.64 & +1.24 \\
IMDB-M & \textbf{46.47} & 45.68 & +0.79 \\
NCI1 & \textbf{75.67} & 64.56 & +11.11 \\
NCI109 & \textbf{74.07} & 65.29 & +8.78 \\
\midrule
TU8 & \textbf{72.63} & 67.14 & +5.49 \\
\bottomrule
\end{tabular}
\caption{TU graph benchmarks.}
\label{tab:encoder_ablation}
\end{subfigure}
\hfill
\begin{subfigure}[b]{0.58\linewidth}
\centering
\includegraphics[width=\linewidth]{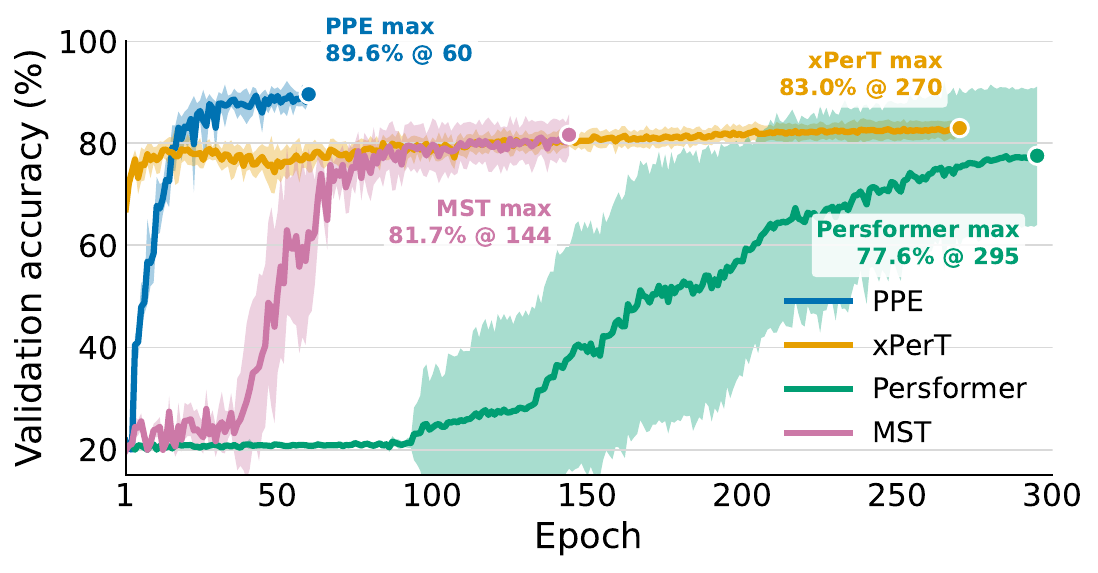}
\caption{Validation convergence on ORBIT5K.}
\label{fig:orbit5k_encoder_convergence}
\end{subfigure}
\caption{\textbf{Coordinate-encoder ablation and validation convergence.} (\subref{tab:encoder_ablation}) Accuracy (\%) for fixed Fourier versus learned point-MLP encoders in the same~\ours{} backbone on TU graph benchmarks, where \textbf{bold} marks the better encoder per row. 
(\subref{fig:orbit5k_encoder_convergence}) Validation accuracy versus epoch on ORBIT5K under the unified protocol. Each line is the mean over the ten cross-validation folds and each band spans those folds. Each label gives a method's peak mean accuracy and the epoch at which it occurs.}
\label{fig:encoder_ablation}
\end{figure}
Across all three matched point-representation controls, RFF raises TU8 by 6.47~pp with mean+sum pooling, by 5.49~pp with the Transformer, and by 7.28~pp over raw coordinates with DeepSets. Changing the downstream backbone shifts TU8 by 1.45~pp with Point MLP and by 0.47~pp with RFF, while the five fixed-RFF downstream modules span 1.96~pp. RFF is higher on at least six of eight datasets in each point-representation control and on all eight with the Transformer, with the largest single-dataset gain reaching 19.5~pp on DHFR under mean+sum pooling. In Table~\ref{tab:full_interface_controls}, every point-encoder replacement produces a larger TU8 macro gain than any backbone or pooling change, which makes the encoder the more consequential of the two choices tested.
Figure~\ref{fig:encoder_ablation}(\subref{tab:encoder_ablation}) breaks the two Transformer rows of Table~\ref{tab:full_interface_controls} down by dataset.
Figure~\ref{fig:encoder_ablation}(\subref{fig:orbit5k_encoder_convergence}) further shows the validation dynamics on ORBIT5K. PPE reaches the highest and earliest peak, at 89.6\% by epoch 60, while MST and \xpert{} peak later at 81.7--83.0\%. \persformer{} shows large fold-to-fold variation and reaches 77.6\% at epoch 295. Consistently, \citet{kim2024xpert} reproduced \persformer{} with the authors' official code and reported it as ``not trainable'' on ORBIT5K.
An ORBIT5K coordinate-noise sweep, where the Fourier encoder wins every level, is reported in Figure~\ref{fig:noise_sweep} (Appendix~\ref{app:noise_sweep}).

\subsection{Sub-cell Fidelity Discrimination: Plain vs.\ Sprinkle Donuts}
\label{sec:donut_subcell}

Persistent homology records feature lifetimes, but lifetime alone does not determine which coordinate differences matter for classification, and hard binning discards a point's position within its assigned cell. We therefore use plain and sprinkle donuts, which share coarse topology, to test whether a PD interface still exposes class-relevant within-cell variation.

\noindent\textbf{Dataset.}
We construct a binary classification task over two donut variants, namely~\emph{plain} donuts (3D point-cloud renderings of donut shapes, approximately torus-topological) and~\emph{sprinkle} donuts (the same donut shape with sprinkles on the surface). The two classes share identical coarse topology (connected component, dominant loop) and have nearly identical $H_0/H_1$ PDs. The sprinkle signal manifests only as a sub-cell shift in PD-coordinate statistics, where $H_0$ finite death values and $H_1$ birth-coordinate quantiles are slightly larger for sprinkle donuts. A logistic regression on hand-crafted PD statistics reaches 78.0$\pm$9.3\% accuracy under 10-fold CV, confirming the discriminative signal is present in the diagrams. Each point cloud is produced by a text${\to}$image${\to}$3D pipeline and uniformly subsampled to 2048 points. The full construction protocol and per-class counts are in Appendix~\ref{app:donut_subcell}.

\smallskip 
\noindent\textbf{Results.}
Under random hyperparameter search,~\ours{} reaches the highest mean test accuracy among the four evaluated Transformer-based PD methods at 71.97\%, ahead of~\xpert{} (54.11\%),~\persformer{} (52.32\%), and~\mst{} (50.91\%), a margin of 17.86 to 21.06~pp (Table~\ref{tab:donut_classification} in Appendix~\ref{app:donut_subcell}).
Theorem~\ref{thm:grid_capacity} shows that, for uniform cell averaging, a fixed cell partition loses information that no decoder can recover.~\xpert{} applies such a partition to PD points, mapping every point in a cell to the same representative and so discarding its position within the cell.
~\ours{} instead maps each $(b,d)$ point to fixed Fourier features before self-attention, and Figure~\ref{fig:xpert_donut_tokenization} in Appendix~\ref{app:donut_subcell} illustrates the grid tokenization on this benchmark.
The donut result shows that class-discriminative information can reside in sub-cell PD coordinates that hard grid binning discards.

\begin{table}[!htbp]
\begingroup
\caption{\textbf{Fixed-versus-learned frequency ablation.} Mean accuracy (\%) under matched conditions. Fixed keeps the sampled Fourier frequencies frozen, while Learned makes the same initialization trainable. The $1\times$, $10\times$, and $100\times$ labels denote frequency learning-rate multipliers; for $10\times$ and $100\times$, frequency weight decay is scaled inversely. Range is the max–min difference within each row.}
\label{tab:learned_frequency_ablation}
\footnotesize
\setlength{\tabcolsep}{5.5pt}
\noindent\makebox[\linewidth][c]{%
\begin{tabular*}{\linewidth}{@{\extracolsep{\fill}}lccccc@{}}
\toprule
\textbf{Benchmark} & \textbf{Fixed} & \textbf{Learned $1\times$ LR} & \textbf{Learned $10\times$ LR} & \textbf{Learned $100\times$ LR} & \textbf{Range} \\
\midrule
TU8 macro & 72.68 & 72.68 & 72.67 & 72.79 & 0.12 \\
ORBIT5K & 90.52 & 90.34 & 90.26 & 90.16 & 0.36 \\
\bottomrule
\end{tabular*}
}
\endgroup
\end{table}

\section{Discussion and Limitations}
\label{sec:discussion_and_limitations}
\noindent\textbf{Fixed versus learned Fourier frequencies.}
We test whether the Fourier frequencies sampled from a Gaussian distribution should be learned end to end.
Table~\ref{tab:learned_frequency_ablation} compares fixed and learned Fourier frequencies under otherwise identical experimental conditions.
No learned setting consistently improves accuracy across TU8 and ORBIT5K. This result supports the practical use of the sampled-and-frozen frequency design analyzed in Theorem~\ref{thm:ppe_fixed_encoder_universality}.

\noindent\textbf{Limitations.}
Since \ours{} uses a standard Transformer, attention scales as \(O(N^2)\) with \(N\) input tokens, making dense point-cloud PDs such as ORBIT5K more costly.
Capping tokens to the longest-lived points in each homology dimension mitigates this cost. 
On MCB-C, halving this cap from 1,024 to 512 preserves balanced accuracy (65.79 vs. 65.82,~Table~\ref{tab:mcb_cap_sensitivity}). 

\section{Conclusion}
\label{sec:conclusion}
We asked how far a persistence diagram can be carried by the representation of its individual points, leaving the Transformer alone. \ours{} maps each birth--death pair through a fixed random Fourier feature map and a learnable homology-dimension embedding, and hands the resulting tokens to an off-the-shelf encoder. Under the unified Transformer comparison it has the highest mean among the evaluated Transformer-based PD models across the nine accuracy benchmarks, and in the frozen-transfer evaluations it also leads them on both larger-scale benchmarks, while the matched controls move single-dataset accuracy by as much as 19.5~pp. 
At every resolution, uniform grid averaging leaves a reconstruction error that no decoder can bring below a fixed positive bound, whereas a Transformer with fixed randomly sampled Fourier frequencies is almost surely a universal approximator without knowing the target frequencies.
Because \ours{} produces model-width tokens without PD-aware attention or pooling, it drops into token-based Transformer pipelines as they are.

\setcounter{table}{0}
\renewcommand{\thetable}{A\arabic{table}}
\setcounter{figure}{0}
\renewcommand{\thefigure}{A\arabic{figure}}
\setcounter{equation}{0}
\renewcommand{\theequation}{A\arabic{equation}}
\appendix

\begingroup
\section{Additional Experimental Comparisons}
\label{app:additional_results}
\endgroup

The main-text summary in Table~\ref{tab:tu8_method_summary} is complemented here by common-head comparisons. Table~\ref{tab:fixed_vector_full} holds the downstream model fixed within each block and reports every dataset, covering the two common heads, the persistence kernels, and the RationalHat projection. All TU8 rows use the shared prepared ordinary-persistence inputs and outer folds.

\begin{table}[!htbp]
\centering
\caption{\textbf{Per-dataset PI versus RFF comparison with a matched Transformer backbone.} Accuracy (\%) over ten folds and five seeds per fold. Both representations use the same Transformer architecture, folds, and validation-only selection rule. Bold marks the higher mean for each dataset.}
\label{tab:pi_transformer_dataset_control}
\footnotesize \setlength{\tabcolsep}{8pt}
\begin{tabular}{@{}lccc@{}}
\toprule
\textbf{Dataset} & \textbf{PI + Transformer} & \textbf{RFF + Transformer} & \textbf{PI $-$ RFF} \\
\midrule
MUTAG & 81.70 & \textbf{85.38} & $-$3.68 \\
PROTEINS & \textbf{73.22} & 71.72 & +1.50 \\
COX2 & 78.16 & \textbf{78.30} & $-$0.14 \\
DHFR & 60.50 & \textbf{79.56} & $-$19.06 \\
IMDB-B & 50.30 & \textbf{69.88} & $-$19.58 \\
IMDB-M & 43.11 & \textbf{46.47} & $-$3.36 \\
NCI1 & 66.92 & \textbf{75.67} & $-$8.75 \\
NCI109 & 65.83 & \textbf{74.07} & $-$8.24 \\
\bottomrule
\end{tabular}
\end{table}

\begin{table}[!htbp]
\centering
\caption{\textbf{PersLay variant comparison.} All variants use the same PyTorch PersLay implementation, prepared persistence diagrams, outer folds, training protocol, and validation-only selection. TU8 is the unweighted mean over the eight graph datasets. Gray shading and \textbf{bold} mark the highest mean, while \underline{underline} marks the second-highest mean.}
\label{tab:perslay_variants}
\scriptsize \setlength{\tabcolsep}{7pt}
\begin{tabular}{@{}lcc@{}}
\toprule
\textbf{PersLay variant} & \textbf{ORBIT5K} & \textbf{TU8} \\
\midrule
Image & \cellcolor{gray!15}\textbf{81.36} & \cellcolor{gray!15}\textbf{71.10} \\
Line/PEQ & \underline{36.49} & \underline{66.31} \\
Landscape & 32.97 & 64.05 \\
Exponential & 27.65 & 65.94 \\
\bottomrule
\end{tabular}
\end{table}

\begin{table}[!htbp]
\centering
\caption{\textbf{Controlled comparison across TU8 datasets.}
Rows are grouped by downstream model or model family, with matched representations sharing the same downstream model and evaluation protocol.
Per-dataset entries report mean test accuracy and population standard deviation across ten outer folds; TU8 is the unweighted mean over the eight datasets.
Kernel construction uses training diagrams only, and SVM selection uses validation data only.
Gray shading and \textbf{bold} mark the highest mean in each column across all configurations, and \underline{underlining} marks the second highest.}
\label{tab:fixed_vector_full}
\scriptsize \setlength{\tabcolsep}{2.4pt}
\resizebox{\linewidth}{!}{%
\begin{tabular}{@{}lccccccccc@{}}
\toprule
\textbf{Representation} & \textbf{MUTAG} & \textbf{PROTEINS} & \textbf{COX2} & \textbf{DHFR} & \textbf{IMDB-B} & \textbf{IMDB-M} & \textbf{NCI1} & \textbf{NCI109} & \textbf{TU8} \\
\midrule
\multicolumn{10}{@{}l}{\textbf{Common MLP head from the persistence-image configuration}} \\
\midrule
Persistence image & 82.76{\tiny$\pm$6.24} & \cellcolor{gray!15}\textbf{73.47{\tiny$\pm$3.66}} & 77.39{\tiny$\pm$1.44} & 61.16{\tiny$\pm$0.90} & 65.72{\tiny$\pm$3.44} & 43.27{\tiny$\pm$5.17} & 66.01{\tiny$\pm$2.55} & 65.57{\tiny$\pm$1.77} & 66.92 \\
ATOL & 86.54{\tiny$\pm$4.28} & 71.66{\tiny$\pm$3.54} & 77.65{\tiny$\pm$0.79} & 72.20{\tiny$\pm$5.78} & \underline{67.32{\tiny$\pm$6.46}} & 44.03{\tiny$\pm$3.98} & 64.32{\tiny$\pm$1.30} & 66.00{\tiny$\pm$1.57} & 68.71 \\
Persistence landscape & 80.95{\tiny$\pm$10.87} & 66.45{\tiny$\pm$2.08} & 78.08{\tiny$\pm$0.55} & 71.14{\tiny$\pm$5.06} & 64.96{\tiny$\pm$3.06} & 40.99{\tiny$\pm$2.85} & 61.14{\tiny$\pm$1.41} & 60.73{\tiny$\pm$2.22} & 65.56 \\
Persistence silhouette & 66.30{\tiny$\pm$1.34} & 67.53{\tiny$\pm$2.46} & 78.08{\tiny$\pm$0.76} & 60.90{\tiny$\pm$0.40} & 65.44{\tiny$\pm$3.51} & 41.29{\tiny$\pm$2.85} & 56.76{\tiny$\pm$2.19} & 56.98{\tiny$\pm$2.56} & 61.66 \\
Betti curve & 85.19{\tiny$\pm$5.70} & 68.71{\tiny$\pm$2.51} & 77.35{\tiny$\pm$0.89} & 78.18{\tiny$\pm$2.93} & 65.12{\tiny$\pm$2.97} & 40.92{\tiny$\pm$3.37} & 69.49{\tiny$\pm$2.01} & 70.10{\tiny$\pm$1.35} & 69.38 \\
\midrule
\multicolumn{10}{@{}l}{\textbf{Random forest with 100 trees}} \\
\midrule
Persistence image & 83.94{\tiny$\pm$7.46} & \underline{72.09{\tiny$\pm$3.14}} & 78.00{\tiny$\pm$5.42} & 76.93{\tiny$\pm$3.92} & 66.88{\tiny$\pm$4.70} & \underline{46.21{\tiny$\pm$5.01}} & 74.51{\tiny$\pm$1.93} & 74.79{\tiny$\pm$1.86} & \underline{71.67} \\
ATOL & 88.22{\tiny$\pm$3.70} & 70.38{\tiny$\pm$3.72} & \underline{78.84{\tiny$\pm$4.67}} & \underline{79.02{\tiny$\pm$2.98}} & \cellcolor{gray!15}\textbf{71.92{\tiny$\pm$4.80}} & \cellcolor{gray!15}\textbf{47.05{\tiny$\pm$3.63}} & 72.74{\tiny$\pm$1.29} & 72.31{\tiny$\pm$1.64} & \cellcolor{gray!15}\textbf{72.56} \\
Persistence landscape & 85.79{\tiny$\pm$6.55} & 68.41{\tiny$\pm$3.10} & 78.64{\tiny$\pm$5.30} & \cellcolor{gray!15}\textbf{79.57{\tiny$\pm$4.37}} & 65.10{\tiny$\pm$5.69} & 40.29{\tiny$\pm$2.51} & \underline{76.42{\tiny$\pm$1.95}} & \underline{75.11{\tiny$\pm$1.96}} & 71.17 \\
Persistence silhouette & 84.07{\tiny$\pm$8.71} & 66.47{\tiny$\pm$3.24} & 76.01{\tiny$\pm$4.66} & 76.87{\tiny$\pm$4.57} & 65.58{\tiny$\pm$4.41} & 40.20{\tiny$\pm$2.25} & 70.01{\tiny$\pm$1.82} & 68.83{\tiny$\pm$1.53} & 68.51 \\
Betti curve & \cellcolor{gray!15}\textbf{89.03{\tiny$\pm$5.33}} & 68.94{\tiny$\pm$3.07} & 78.17{\tiny$\pm$3.24} & 77.91{\tiny$\pm$4.23} & 65.00{\tiny$\pm$4.49} & 40.52{\tiny$\pm$2.03} & 74.70{\tiny$\pm$1.76} & 73.78{\tiny$\pm$1.98} & 71.01 \\
\midrule
\multicolumn{10}{@{}l}{\textbf{Support vector machine with persistence kernels}} \\
\midrule
Persistence scale-space & 86.95{\tiny$\pm$8.66} & 68.11{\tiny$\pm$2.19} & \cellcolor{gray!15}\textbf{79.25{\tiny$\pm$3.15}} & 76.19{\tiny$\pm$5.84} & 65.50{\tiny$\pm$3.83} & 40.73{\tiny$\pm$3.16} & 69.93{\tiny$\pm$1.68} & 69.78{\tiny$\pm$1.29} & 69.56 \\
Persistence-weighted Gaussian & 80.87{\tiny$\pm$8.83} & 67.03{\tiny$\pm$2.78} & 77.08{\tiny$\pm$3.41} & 75.67{\tiny$\pm$4.30} & 64.80{\tiny$\pm$2.56} & 42.00{\tiny$\pm$2.55} & 70.44{\tiny$\pm$1.65} & 67.99{\tiny$\pm$2.50} & 68.23 \\
Sliced Wasserstein & 86.78{\tiny$\pm$3.98} & 69.45{\tiny$\pm$3.40} & 76.23{\tiny$\pm$2.23} & 78.72{\tiny$\pm$4.99} & 63.90{\tiny$\pm$3.14} & 41.80{\tiny$\pm$3.39} & \cellcolor{gray!15}\textbf{77.44{\tiny$\pm$2.41}} & \cellcolor{gray!15}\textbf{76.42{\tiny$\pm$2.17}} & 71.34 \\
\midrule
\multicolumn{10}{@{}l}{\textbf{RationalHat projection with an MLP head}} \\
\midrule
RationalHat (graph MLP) & 80.29{\tiny$\pm$6.17} & 69.27{\tiny$\pm$2.89} & 78.16{\tiny$\pm$0.58} & 61.17{\tiny$\pm$2.73} & 64.12{\tiny$\pm$3.79} & 40.55{\tiny$\pm$2.95} & 61.65{\tiny$\pm$2.55} & 60.14{\tiny$\pm$2.00} & 64.42 \\
RationalHat (MLP) & \underline{88.39{\tiny$\pm$6.64}} & 70.74{\tiny$\pm$3.32} & 78.16{\tiny$\pm$0.58} & 71.42{\tiny$\pm$5.51} & 64.36{\tiny$\pm$3.94} & 40.51{\tiny$\pm$4.12} & 68.25{\tiny$\pm$2.35} & 68.56{\tiny$\pm$1.40} & 68.80 \\
\bottomrule
\end{tabular}
}
\end{table}

Table~\ref{tab:large_scale_full} expands the larger-scale comparison in Table~\ref{tab:graph_benchmarks} to the evaluated PD methods. The ogbg-molhiv experiment uses 41{,}127 graphs, the official scaffold split, and five seeds. The MCB-C experiment uses 1{,}745 samples, ten predefined folds, and five seeds per fold. No target-specific hyperparameter sweep is performed for these frozen-transfer evaluations.

\begin{table}[!htbp]
\centering
\caption{\textbf{Full larger-scale comparison.} The ogbg-molhiv comparison reports ROC-AUC and the MCB-C comparison reports balanced accuracy. Values are pooled seed-run mean and population standard deviation. Models marked $^\dagger$ use the original graph or point cloud rather than a persistence diagram and are included only as task-level references, and are excluded from the ranking. The graph references are GIN~\citep{xu2019gin} and GCN~\citep{kipf2017gcn}, and the point-cloud reference follows PointNet~\citep{qi2017pointnet}. Gray shading and \textbf{bold} mark the highest mean in each column, while \underline{underline} marks the second-highest.}
\label{tab:large_scale_full}
\scriptsize
\begin{minipage}[t]{0.49\linewidth}
\centering
\setlength{\tabcolsep}{3pt}
\begin{tabular*}{\linewidth}{@{\extracolsep{\fill}}lr@{}}
\toprule
\multicolumn{2}{@{}l}{\textbf{ogbg-molhiv ROC-AUC}} \\
\midrule
\textbf{Method} & \textbf{Score} \\
\midrule
\perslay{} & \cellcolor{gray!15}\textbf{70.92{\tiny$\pm$0.30}} \\
Persistence image + MLP & \underline{70.38{\tiny$\pm$0.42}} \\
\ours{} & 69.32{\tiny$\pm$1.51} \\
\xpert{} & 68.27{\tiny$\pm$0.71} \\
ATOL + random forest & 67.71{\tiny$\pm$0.74} \\
\mst{} & 63.13{\tiny$\pm$2.72} \\
\persformer{} & 63.00{\tiny$\pm$0.47} \\
\midrule
Molecular GIN$^\dagger$ & 73.10{\tiny$\pm$0.69} \\
Molecular GCN$^\dagger$ & 70.06{\tiny$\pm$1.08} \\
\bottomrule
\end{tabular*}
\end{minipage}%
\hfill
\begin{minipage}[t]{0.49\linewidth}
\centering
\setlength{\tabcolsep}{3pt}
\begin{tabular*}{\linewidth}{@{\extracolsep{\fill}}lr@{}}
\toprule
\multicolumn{2}{@{}l}{\textbf{MCB-C balanced accuracy}} \\
\midrule
\textbf{Method} & \textbf{Score} \\
\midrule
\ours{} & \cellcolor{gray!15}\textbf{65.79{\tiny$\pm$4.27}} \\
ATOL + random forest & \underline{64.40{\tiny$\pm$2.65}} \\
Persistence image + MLP & 58.61{\tiny$\pm$4.36} \\
\xpert{} & 56.33{\tiny$\pm$4.28} \\
\mst{} & 49.30{\tiny$\pm$5.64} \\
\perslay{} & 49.15{\tiny$\pm$3.81} \\
\persformer{} & 14.69{\tiny$\pm$10.66} \\
\midrule
\makecell[l]{PointNet-style$^\dagger$\\[-1pt]20k input points} & \makecell[r]{63.76{\tiny$\pm$4.88}\\[-1pt]\strut} \\
\bottomrule
\end{tabular*}
\end{minipage}
\end{table}

\section{Related Work}
\label{app:related}

\noindent\textbf{Applications of Persistent Homology in the Sciences.}
Persistent homology has produced application-driven evidence across scientific domains. In molecular and drug-discovery contexts, topological invariants of biomolecular complexes inform property prediction, ligand--target scoring, and virtual screening~\citep{cangwei2017topologynet,weexia2021,cangwuwei2018drugdesign,nguyen2019d3r,townsend2020}. In materials science, persistence captures the hierarchical pore architectures that govern bulk mechanical and transport properties of amorphous and granular media~\citep{hiraoka2016amorphous,saadatfar2017granular}. In dynamical systems, sliding-window embeddings and time-series persistence recover attractor topology and detect parameter changes such as machining-chatter onset~\citep{pereaharer2015sliding,khasawnehmunch2016chatter}. In structural biology, persistence quantifies binding-state separation and conformational compressibility~\citep{kovacevnikolic2016protein,gameiro2015protein}, and topological fingerprints of protein structure model flexibility and predict folding stability~\citep{xia2014persistent}. In neuroscience, persistent homology and time-varying ``vineyard'' analyses reveal higher-order brain-connectivity structure across cognitive states, individuals, and clinical populations~\citep{petri2014homological,saggar2018dynamical,sizemore2019importance,yoo2016vineyard,ghahremani2018alteration}. In graph machine learning, persistence features alone can match GNN performance on link-prediction benchmarks, complementing message passing rather than duplicating it~\citep{you2024phlp}. These applications motivate a reusable PD-to-token interface for settings where a Transformer is the intended downstream backbone.

\smallskip 
\noindent\textbf{Persistence Diagrams and Machine Learning.}
Several vectorization approaches address the multiset/variable-size mismatch by moving persistence diagrams into spaces compatible with standard learning algorithms. Persistence landscapes~\citep{bubenik2015statistical} map a diagram to a sequence of piecewise-linear functions in a Banach space, enabling averages, limit theorems, hypothesis tests, and lower bounds for bottleneck and Wasserstein distances. Persistence images~\citep{adams2017persistence} instead place persistence-weighted Gaussian mass on each diagram point and integrate over a fixed grid, yielding stable finite-dimensional vectors that work directly with vector-space classifiers. Persistence silhouettes~\citep{chazal2015silhouette} are a weighted functional summary of the same family with an analogous convergence theory, and Betti curves~\citep{umeda2017betti} record the number of features alive at each filtration value. Kernel methods such as the stable multi-scale kernel of~\citet{reininghaus2015stable}, the persistence-weighted Gaussian kernel of~\citet{kusano2016pwg}, and the sliced Wasserstein kernel of~\citet{carriere2017sliced} avoid an explicit finite vectorization and compare diagrams through a positive-definite kernel with stability guarantees under the 1-Wasserstein distance. These methods made persistence diagrams usable in classical machine-learning pipelines, but their stability is obtained through functional smoothing, rasterization, or pairwise kernel comparison.~\ours{} targets the complementary setting where a neural encoder consumes the diagram points directly while preserving fine coordinate information.

\smallskip 
\noindent\textbf{Deep Learning on Persistence Diagrams.}
\perslay{}~\citep{carriere2020perslay} introduced permutation-invariant layers for PDs (87.7\% on ORBIT5K); PLLay~\citep{kim2020pllay} learned persistence landscapes end-to-end. \persformer{}~\citep{reinauer2021persformer} uses a learned pointwise coordinate embedding, stacked self-attention, learned-query multi-head attention pooling, and an MLP decoder, while its theory establishes universal approximation. Its primary focus is the end-to-end learned architecture rather than a controlled comparison between fixed and learned point tokenizers under a common downstream Transformer. Section~\ref{sec:encoder_ablation} isolates that choice by replacing only the fixed Fourier encoder with a learned two-layer raw-coordinate point MLP. \mst{}~\citep{multiset-transformer-2025} instead encodes point multiplicities in its attention weights and uses multiset-specific equivariant blocks and invariant readout (implementation details below). \xpert{}~\citep{kim2024xpert} discretizes persistence diagrams through hard grid binning and processes the resulting patches with a ViT-style Transformer. Persistence images instead form smooth additive surfaces before rasterization. Our grid analysis isolates the resolution cost of a uniform cell-averaging abstraction of hard binning rather than treating all grid-based summaries as equivalent. Other learned vectorizations include the differentiable structure-element layers introduced by~\citet{hofer2017deep} and extended by~\citet{hofer2019learning}, as well as the learned unsupervised measure vectorization of ATOL~\citep{royer2021atol}. Persistent homology and message passing can capture complementary information~\citep{horn2022togl}. We therefore treat raw-input GNNs as task-level references rather than PD-encoder baselines, while joint graph--topology models remain a natural extension. Under the unified ordinary-PD sweep in Table~\ref{tab:graph_benchmarks},~\ours{} achieves the highest mean on every reported dataset among the evaluated Transformer-based PD methods.

\smallskip
\noindent\textbf{Implementation details of Transformer-based PD interfaces.}
Section~\ref{sec:intro} summarizes where the Transformer-based PD methods discussed there introduce diagram-specific design. Here we give the corresponding implementation-level details, covering point encoding, encoder connectivity, token construction, multiplicity handling, readout, and coupled representation hyperparameters.

\smallskip
\noindent\emph{\persformer{}}~\citep{reinauer2021persformer} retains standard scaled dot-product multi-head attention while introducing diagram-specific choices in the surrounding components. Points pass through a trainable position-wise fully connected embedding, homology dimension is supplied as a one-hot input, residual connections are added between self-attention layers to counter the small gradients the authors attribute to closely spaced points, and the readout is attention pooling against a single trainable query. Comparing a point representation directly against~\persformer{} therefore also changes the encoder connectivity and the readout.

\smallskip
\noindent\emph{\xpert{}}~\citep{kim2024xpert} concentrates its design in the input representation. A diagram is normalized, pixelized onto an $R\times R$ grid, and partitioned into $P\times P$ patches, of which the non-empty ones become tokens. Extended diagrams are represented as multi-channel tensors. The resulting tokens are processed by a ViT-style encoder with sinusoidal positional encodings and a CLS readout. This introduces a coupling between the two representation choices. Before empty patches are removed there are $(R/P)^2$ patches, so the patch size must be compatible with the grid resolution.

\smallskip
\noindent\emph{\mst{}}~\citep{multiset-transformer-2025} adapts the Transformer to multisets. A diagram is a set of distinct points with a separate multiplicity vector, and multiplicity-dependent terms are added to the attention weights rather than appended as coordinates. Attention, the equivariant blocks and the invariant readout are consequently all multiset-specific. DBSCAN clustering may optionally be applied first, with the original work reporting results with and without it, and for graph classification each diagram is processed by an independent MST instance before the resulting representations are concatenated.

\smallskip
\noindent\emph{\ours{}} confines its design to the input. 
A point becomes a token through fixed random Fourier features of $(b,d)$ together with a learnable homology-dimension embedding, and the blocks, the attention rule and the CLS readout are unchanged. 
The corresponding encoder-only control is reported in Section~\ref{sec:encoder_ablation}.

\smallskip
\noindent\textbf{Fourier Features and Transformers for Point Sets.}
\citet{tancik2020fourier} showed that mapping coordinates through random Fourier features overcomes the spectral bias of MLPs, enabling learning of high-frequency patterns, a technique widely adopted in NeRF~\citep{mildenhall2020nerf}. We adapt this principle to persistence diagram coordinates. The fixed-backbone ablation in Section~\ref{sec:encoder_ablation} shows that the point-encoder choice can materially affect performance, with the margin varying across datasets. Transformers~\citep{vaswani2017attention} have been adapted for unordered point sets via Set Transformer~\citep{lee2019set} and Point Transformer~\citep{zhao2021point}. Since persistence diagrams are naturally unordered point sets, permutation-equivariant self-attention is a principled choice.

\smallskip 
\noindent\textbf{Concurrent and complementary work.}
A growing thread combines coordinate encodings with topological features, among them bi-continuous functional representations~\citep{persistence-spheres-2026}, custom multiset attention~\citep{multiset-transformer-2025}, measure-based learning~\citep{hacquard2023statistical}, graphcodes~\citep{kerber2024graphcode}, and PiPE~\citep{verma2025pipe}.~\ours{} sits in this thread on the simplest representational end, using fixed random Fourier features on PD coordinates.

\section{Expressivity and Approximation}
\label{app:Expressivity_and_Approximation}

\subsection{Proof of the grid reconstruction lower bound}
\label{app:exp:c3_grid_capacity}

\begin{proof}[Proof of Theorem~\ref{thm:grid_capacity}]
Fix $R\in\mathbb N$ and set $\delta=c/R$. Define
\[
f_1(x)=\cos(2\pi x_1/\delta),
\qquad
f_2(x)=\cos(2\pi x_2/\delta).
\]
Both functions belong to $C(X)$ and satisfy
$\|f_1\|_{\infty,X}=\|f_2\|_{\infty,X}=1$.

Each grid cell has side length $\delta$. For every cell $C_{ij}$,
\[
\int_{C_{ij}} f_1(x)\,dx
=
\delta\int_{(i-1)\delta}^{i\delta}
\cos(2\pi t/\delta)\,dt
=
0,
\]
and the same calculation gives $\int_{C_{ij}} f_2(x)\,dx=0$. Hence
\[
E_{\mathrm{grid},R}(f_1)
=
E_{\mathrm{grid},R}(f_2)
=
0.
\]

Since $R$ is an integer,
\[
\|f_1\|_{L^2(X)}^2
=
\|f_2\|_{L^2(X)}^2
=
\frac{c^2}{2},
\qquad
\langle f_1,f_2\rangle_{L^2(X)}=0.
\]
Therefore $\|f_1-f_2\|_{L^2(X)}=c$.

For any decoder $\mathcal G$, let $g=\mathcal G(0)$. By the triangle inequality,
\[
c
=
\|f_1-f_2\|_{L^2(X)}
\le
\|f_1-g\|_{L^2(X)}
+
\|f_2-g\|_{L^2(X)}.
\]
Thus at least one of the two reconstruction errors is at least $c/2$, and therefore
\[
\sup_{\substack{f\in C(X)\\ \|f\|_{\infty,X}\le1}}
\|f-\mathcal G(E_{\mathrm{grid},R}(f))\|_{L^2(X)}
\ge
\frac{c}{2}.
\]
\end{proof}

\subsection{Proof of universal approximation}
\label{app:exp:c3_realization}

\paragraph{Domain and topology.}
Fix $c>0$ and an integer $q\ge1$. Let
$\mathcal H_c=\{(b,d)\in[0,c]^2\mid b\le d\}$ and
$\mathcal D_c=\bigsqcup_{N=0}^{q}\mathcal H_c^N/\mathfrak S_N$,
with the quotient topology on each component and the disjoint-union
topology on $\mathcal D_c$. The $N=0$ component contains the empty
diagram, and diagonal points are retained. Each component is compact,
so $\mathcal D_c$ is compact.

\paragraph{Architecture.}
Fix an integer $H\ge1$ and choose $d_{\mathrm{model}}\ge6q+10$
divisible by $H$. We use two pre-LayerNorm Transformer layers with
$H$ softmax attention heads of dimension $d_{\mathrm{model}}/H$
and affine--GELU--affine FFNs. The model has an affine input projection,
a trainable CLS token, and a final LayerNorm followed by a scalar
linear head. There is no sequence-position encoding or dropout.
Padding keys are masked, and CLS remains unmasked. Each LayerNorm
has a fixed positive variance constant, unit scale, and zero shift.

\begin{proof}[Proof of Theorem~\ref{thm:ppe_fixed_encoder_universality}]
Let $\Omega\in\mathbb R^{m\times2}$, $m\ge3$, have rows
$\omega_j^\top$, where the $\omega_j$ are sampled i.i.d. from a
nondegenerate Gaussian distribution on $\mathbb R^2$. We use the
paired features
$(\cos(2\pi\omega_j^\top x),\sin(2\pi\omega_j^\top x))$.

Let $A$ consist of the first two rows of $\Omega$. Almost surely,
$A$ is invertible. If the first three Fourier pairs agree at $x$
and $y$, then $v=x-y$ satisfies
\[
v=A^{-1}k,
\qquad
\omega_3^\top A^{-1}k\in\mathbb Z
\]
for some $k\in\mathbb Z^2$. Conditional on an invertible $A$,
$\omega_3^\top A^{-1}k$ is a nondegenerate Gaussian variable
for each nonzero $k$, so it belongs to $\mathbb Z$ with probability
zero. Taking the countable union over
$k\in\mathbb Z^2\setminus\{0\}$ shows that the map given by the first
three Fourier pairs is injective on $\mathbb R^2$ almost surely.

Fix such an $\Omega$. Let $F\colon\mathcal D_c\to\mathbb R$ be
continuous and let $\varepsilon>0$.

\paragraph{Moment representation.}
Set $p=2q+1$. For $x=(b,d)\in\mathbb R^2$, write
$\zeta=b+\mathrm i d$ and define
\[
\phi(x)
=
\bigl(
1,\operatorname{Re}\zeta,\operatorname{Im}\zeta,
\ldots,\operatorname{Re}\zeta^q,\operatorname{Im}\zeta^q
\bigr)
\in\mathbb R^p.
\]
For a diagram $D$, let $M(D)=\sum_{x\in D}\phi(x)$.
The first coordinate gives $N=|D|$. The first $N$ complex power
sums determine $\prod_{j=1}^{N}(t-\zeta_j)$ by Newton's identities.
Thus $M$ determines $D$, including multiplicities. Since $M$ is
continuous and injective on the compact space $\mathcal D_c$,
it has a continuous inverse on its image.

\paragraph{LayerNorm.}
For $\tau>0$, write
\[
\operatorname{LN}_{\tau}(u)
=
\frac{Pu}{\sqrt{\|Pu\|_2^2/d_{\mathrm{model}}+\tau}},
\qquad
P=I-\frac{\mathbf1\mathbf1^\top}{d_{\mathrm{model}}}.
\]
The radial map
$r\mapsto r/\sqrt{r^2/d_{\mathrm{model}}+\tau}$
is strictly increasing for $r\ge0$. Hence equal LayerNorm outputs
imply that the inputs differ by a multiple of $\mathbf1$.
On a set whose elements share one fixed coordinate, this multiple
must be zero. LayerNorm is therefore injective on such a set and,
if the set is compact, has a continuous inverse on its image.

Its Jacobian has operator norm at most $\tau^{-1/2}$, giving
\begin{equation}
\|\operatorname{LN}_{\tau}(u)-\operatorname{LN}_{\tau}(v)\|_2
\le
\frac{\|u-v\|_2}{\sqrt{\tau}}.
\label{eq:ppe_univ_ln_lipschitz}
\end{equation}

Since GELU is continuous and nonpolynomial, affine--GELU--affine
FFNs approximate continuous scalar functions uniformly on compact
sets \citep{leshno1993nonpolynomial}. The vector-valued case follows
by applying the scalar result componentwise after continuous
extension from the compact domain.

\paragraph{First layer.}
Choose the affine input projection so that the first three Fourier
pairs occupy the first six coordinates of each point token.
Set coordinates $2p+1$ and $2p+2$ to $1$ and $-1$, respectively,
and all remaining coordinates to zero. Choose CLS with the same
two fixed coordinates and zeros elsewhere.

The point-token map is continuous and injective. The first six
coordinates have squared norm $3$ for point tokens and vanish
for CLS. Thus CLS is separated from the point tokens.
Their union $E$ is compact, and LayerNorm is injective on $E$.

Choose
$r_0>\max\{1,\|F\|_\infty,
\sup_{x\in\mathcal H_c}\|\phi(x)\|_2\}$.
For $t\in\mathbb R^p$ with $\|t\|_2\le r_0$, define
\[
J(t)
=
\left(
t,-t,1,-1,
\sqrt{r_0^2-\|t\|_2^2},
-\sqrt{r_0^2-\|t\|_2^2},
0_{d_{\mathrm{model}}-2p-4}
\right).
\]
Then
\begin{equation}
\mathbf1^\top J(t)=0,
\qquad
\|J(t)\|_2^2=2r_0^2+2.
\label{eq:ppe_univ_code}
\end{equation}
Since $d_{\mathrm{model}}\ge3p+4$, the last $p$ coordinates
of $J(t)$ vanish.

Set the first attention output to zero. For a point token $u_x$,
prescribe the FFN update $J(\phi(x))-u_x$; for CLS, prescribe
$J(0)-u_{\mathrm{CLS}}$. These updates are continuous functions
of the normalized inputs because LayerNorm has a continuous
inverse on its image. Hence, for every $\eta>0$, the first FFN
can be chosen so that its residual outputs satisfy
\[
\sup_{x\in\mathcal H_c}
\|\widetilde z_x-J(\phi(x))\|_2<\eta,
\qquad
\|\widetilde z_{\mathrm{CLS}}-J(0)\|_2<\eta.
\]

\paragraph{Second layer.}
Let $\tau_a,\tau_f,\tau_o>0$ be the LayerNorm constants for the
second attention branch, the second FFN, and the final output,
respectively. Set
$s_a=\sqrt{(2r_0^2+2)/d_{\mathrm{model}}+\tau_a}$.
By~\eqref{eq:ppe_univ_code},
$\operatorname{LN}_{\tau_a}(J(t))=J(t)/s_a$.

Let $S$ extract the first $p$ coordinates, and let $\iota$ insert
a vector into the last $p$ coordinates. Set the query and key
projections to zero. Choose the value projections as the $H$
consecutive coordinate blocks of the identity matrix, and choose
the output projection as $s_a\iota S$. Set all attention biases
to zero.

For a diagram $D$ with $N=|D|$, each head assigns weight $1/(N+1)$
to every unmasked token. Thus, for the tokens $J(\phi(x))$ and
$J(0)$, the CLS residual after attention is
\begin{equation}
y_D=J(0)+\iota(a_D),
\qquad
a_D=\frac{M(D)}{N+1}.
\label{eq:ppe_univ_cls}
\end{equation}
Since $(a_D)_1=N/(N+1)$,
\[
M(D)=\frac{a_D}{1-(a_D)_1},
\qquad
1-(a_D)_1=\frac{1}{N+1}\ge\frac{1}{q+1}.
\]
Hence $y_D$ determines $D$ continuously.

The set $Y=\{y_D\mid D\in\mathcal D_c\}$ is compact. Attention
leaves the two fixed coordinates of $J(0)$ unchanged, so
$\operatorname{LN}_{\tau_f}$ is injective on $Y$ and has a
continuous inverse on its image.

Choose the final linear head so that the output map is
\[
\mathcal R(v)=s_o[\operatorname{LN}_{\tau_o}(v)]_1,
\qquad
s_o=\sqrt{\frac{2r_0^2+2}{d_{\mathrm{model}}}+\tau_o}.
\]
For the first standard basis vector $e_1\in\mathbb R^p$,
\eqref{eq:ppe_univ_code} gives
$\mathcal R(J(F(D)e_1))=F(D)$.

The map
\[
\operatorname{LN}_{\tau_f}(y_D)
\longmapsto
J(F(D)e_1)-y_D
\]
is continuous on a compact set. Choose the second FFN $f_2$
to approximate it with uniform Euclidean error less than
$\varepsilon\sqrt{\tau_o}/(2s_o)$.
By~\eqref{eq:ppe_univ_ln_lipschitz}, $\mathcal R$ is
$s_o/\sqrt{\tau_o}$-Lipschitz. Therefore
$h(y)=\mathcal R(y+f_2(\operatorname{LN}_{\tau_f}(y)))$
satisfies
\[
\sup_{D\in\mathcal D_c}|h(y_D)-F(D)|
<
\frac{\varepsilon}{2}.
\]

\paragraph{Approximation error.}
Let $\widetilde y_D$ be the CLS residual obtained from the
approximate first-layer outputs. The attention weights remain
uniform because the query and key projections are zero.
Since $S$ and $\iota$ have operator norm one,
\eqref{eq:ppe_univ_ln_lipschitz} gives
\[
\sup_{D\in\mathcal D_c}
\|\widetilde y_D-y_D\|_2
\le
\left(1+\frac{s_a}{\sqrt{\tau_a}}\right)\eta.
\]

With $f_2$ fixed, $h$ is uniformly continuous on a compact
neighborhood of $Y$. Choose $\eta$ sufficiently small, and then
choose the first FFN with this tolerance, so that
$\sup_{D\in\mathcal D_c}|h(\widetilde y_D)-h(y_D)|
<\varepsilon/2$.
Since $T_{\theta,\Omega}(D)=h(\widetilde y_D)$, it follows that
\[
\sup_{D\in\mathcal D_c}
|T_{\theta,\Omega}(D)-F(D)|
<
\varepsilon.
\]
\end{proof}

\paragraph{Multiple homology dimensions.}
Give the finitely many homology labels the discrete topology.
Choose their embeddings so that one unused coordinate of each
initial point token has a distinct value for each label.
Choose translations $a_k\in\mathbb R^2$ such that
$\mathcal H_c+a_k$ are pairwise disjoint, and replace $\phi(x)$
by $\phi_k(x)=\phi(x+a_k)$ for a point with label $k$.
The resulting moment sums determine both the coordinates and
labels, including multiplicities.
Choose
$r_0>\max\{1,\|F\|_\infty,
\max_k\sup_{x\in\mathcal H_c}\|\phi_k(x)\|_2\}$.
The same construction then applies when the total number of
points is at most $q$.

\section{ORBIT5K Noise Sweep}
\label{app:noise_sweep}

We extend the coordinate-encoder ablation of Section~\ref{sec:encoder_ablation} along the coordinate-noise axis on ORBIT5K. The~\ours{} backbone (Transformer encoder, homology-dimension embedding, CLS pooling, linear head, optimizer, and data pipeline) is held fixed. The coordinate encoder alternates between the default fixed Gaussian random-Fourier features (\texttt{model.tokenizer\_family=fourier}) and a learned two-layer MLP point encoder over raw $(b,d)$ coordinates (\texttt{tokenizer\_family=mlp}). Both variants use \texttt{model.encoding\_variant=rff\_plain} and \texttt{model.use\_betti\_embedding=false}. We report mean test accuracy across three training seeds per noise level. Following \citet{adams2017persistence}, we sweep the noise level---the fraction of points replaced by uniform draws over the $(b, d)$ bounding box---across $\{0, 0.05, 0.10, \dots, 0.35\}$.

\begin{figure}[htbp]
\centering
\includegraphics[width=0.6\linewidth]{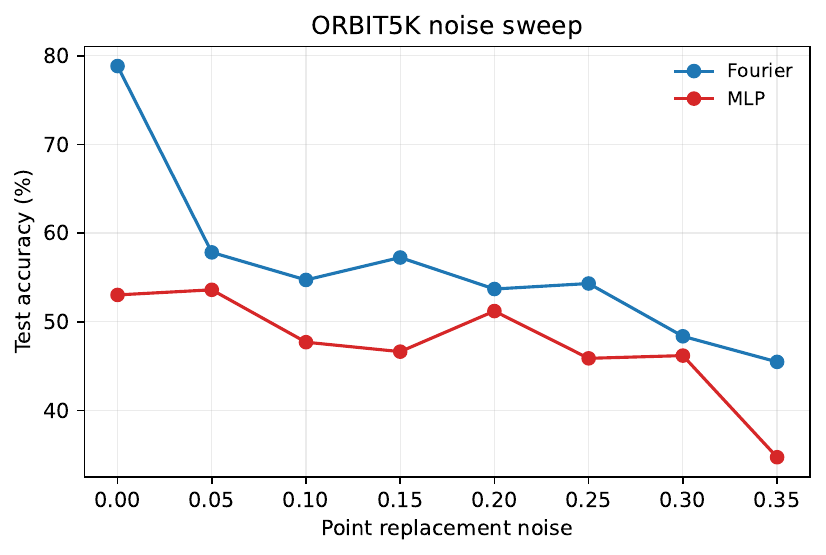}
\caption{\textbf{ORBIT5K coordinate-noise sweep.} Mean test accuracy as a function of point-replacement noise. The fixed Fourier encoder wins every noise level. The gap is largest at zero noise (+25.8~pp) and persists across the sweep. Averaged over the seven noisy settings, Fourier reaches 53.1\% versus the learned MLP's 46.6\%.}
\label{fig:noise_sweep}
\end{figure}

Fixed high-frequency features preserve discriminative structure under coordinate corruption. The learned MLP must induce comparable features and visibly loses ground as the noise budget grows. This sweep uses a fixed split with three seeds per noise level and is reported separately from the 10-fold CV protocol of Table~\ref{tab:graph_benchmarks}.

\section{Sub-cell Donut Benchmark and Tokenization Visualization}
\label{app:donut_subcell}

This appendix provides visual evidence for the grid-collapse mechanism discussed in Section~\ref{sec:donut_subcell}. Under~\xpert{}'s hard $R\times R$ grid binning, plain and sprinkle donuts produce token representations that are visually nearly indistinguishable, even though the two classes are statistically separable on the raw persistence-diagram coordinates.

\paragraph{Construction.} The plain-vs-sprinkle donut point clouds are generated via a three-stage text${\to}$image${\to}$mesh${\to}$point-cloud pipeline. \emph{(i) Text-to-image.} SDXL-Turbo (\url{https://github.com/Stability-AI/generative-models}) renders 2D images from class-conditional prompts (representative samples in Figure~\ref{fig:donut_sdxl_samples}). \emph{(ii) Image-to-3D.} TripoSR (\url{https://github.com/VAST-AI-Research/TripoSR}) lifts each image to a textured triangle mesh. \emph{(iii) Mesh-to-point-cloud.} We take the mesh vertices as the raw point cloud and uniformly randomly subsample to 2048 points, serialized as \texttt{.npy}. We generated 100 candidates per class and manually pruned obvious failure modes---bitten or partially-formed donuts, malformed 3D reconstructions, and other low-quality outputs---yielding 77 plain and 87 sprinkle samples (164 total) for the benchmark.

\begin{figure}[!htbp]
\centering
\includegraphics[width=0.32\linewidth]{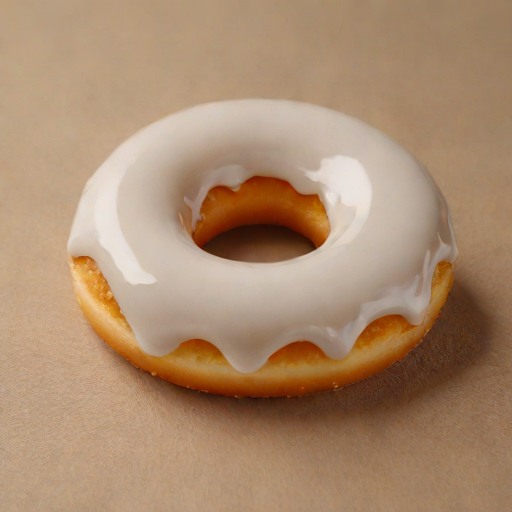}\hspace{0.04\linewidth}\includegraphics[width=0.32\linewidth]{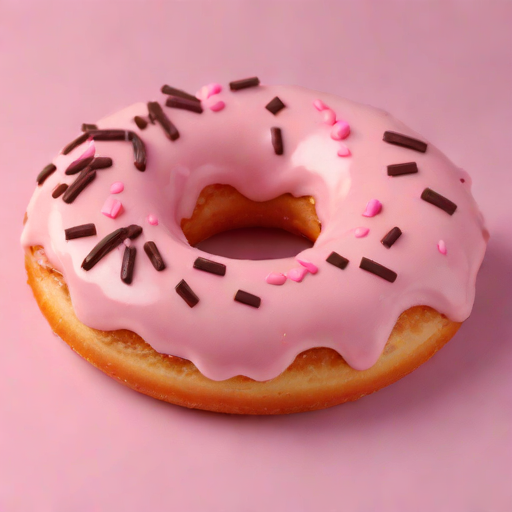}
\caption{\textbf{Representative SDXL-Turbo outputs at stage (i) of the donut-dataset construction pipeline.} Left shows a \emph{plain} donut sample. Right shows a \emph{sprinkle} donut sample. These 2D renderings are subsequently lifted to 3D meshes by TripoSR and uniformly subsampled to 2048-point clouds.}
\label{fig:donut_sdxl_samples}
\end{figure}

\begin{figure}[!htbp]
\centering
\includegraphics[width=0.5\linewidth]{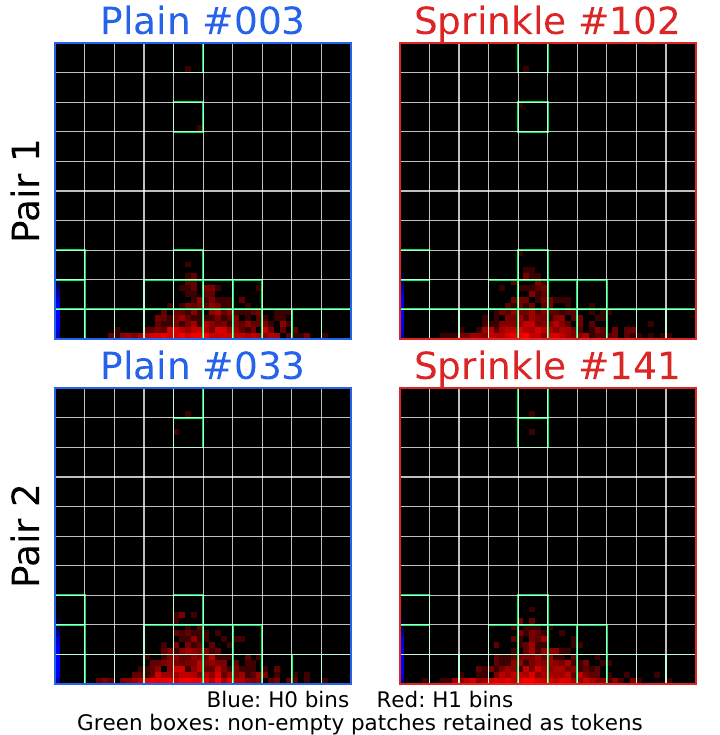}
\caption{\textbf{\xpert{} tokenization of plain (left) versus sprinkle (right) donut persistence diagrams.} Each row pairs a plain-donut sample with a sprinkle-donut sample whose~\xpert{} tokens activate the same grid patches. The hard $R\times R$ grid binning maps every PD point in a cell to the cell centre, so sub-cell coordinate shifts---which carry the plain-vs-sprinkle distinction---are averaged away. The two classes therefore become indistinguishable at the tokenization stage, the empirical analogue of the grid-collapse mechanism formalized in Theorem~\ref{thm:grid_capacity}.}
\label{fig:xpert_donut_tokenization}
\end{figure}

\begin{table}[!htbp]
\centering
\caption{\textbf{Plain-vs-Sprinkle Donut classification.} Accuracy (\%) over ten folds and five seeds per fold. For each method, we report the configuration selected by validation accuracy during random hyperparameter search. Gray shading and \textbf{bold} mark the highest mean in each column, while \underline{underline} marks the second-highest.}
\label{tab:donut_classification}
\footnotesize \setlength{\tabcolsep}{6pt}
\begin{tabular}{lcc}
\toprule
\textbf{Method} & \textbf{Val Acc (\%)} & \textbf{Test Acc (\%)} \\
\midrule
\mst{} & \underline{64.12 $\pm$7.83} & 50.91 $\pm$10.34 \\
\persformer{} & 55.18 $\pm$9.18 & 52.32 $\pm$11.19 \\
\xpert{} & 62.82 $\pm$9.80 & \underline{54.11 $\pm$12.62} \\
\ours{} (Ours) & \cellcolor{gray!15}\textbf{77.29 $\pm$11.71} & \cellcolor{gray!15}\textbf{71.97 $\pm$11.36} \\
\bottomrule
\end{tabular}
\end{table}

\FloatBarrier
\section{Architecture Details}
\label{app:architecture}

\begin{table}[!htbp]
\centering
\caption{\textbf{\ours{} Transformer hyperparameters.} Architecture and training defaults for the unified ORBIT5K and TU experiments. Per-dataset learning rate, weight decay, and dropout are tuned via random grid search on the validation split; batch size, warmup, total epochs, and early-stopping patience are fixed per dataset family (full protocol in~\Aref{app:grid_search}).}
\label{tab:hyperparams}
\footnotesize \setlength{\tabcolsep}{4pt}
\resizebox{\linewidth}{!}{%
\begin{tabular}{ll}
\toprule
\textbf{Component} & \textbf{Configuration} \\
\midrule
\multicolumn{2}{l}{\emph{Fourier Feature Encoder}} \\
\quad Number of frequencies ($m$) & 128 \\
\quad Frequency scale ($\sigma$) & 10.0 \\
\quad Output dimension & 256 ($m$ cosine and $m$ sine components) \\
\midrule
\multicolumn{2}{l}{\emph{Homology-Dimension Embedding}} \\
\quad Number of dimensions & 2 ($H_0$, $H_1$) \\
\quad Embedding dimension & 128 ($=d_{\mathrm{model}}$) \\
\midrule
\multicolumn{2}{l}{\emph{Transformer Encoder}} \\
\quad Model dimension ($d_{\mathrm{model}}$) & 128 \\
\quad Number of heads & 4 \\
\quad Number of layers & 3 \\
\quad Feedforward dimension & 256 \\
\quad Dropout & 0.1 \\
\quad Normalization & pre-LayerNorm \\
\quad Activation & GELU \\
\midrule
\multicolumn{2}{l}{\emph{Training}} \\
\quad Optimizer & AdamW \\
\quad Loss & Cross-entropy \\
\quad LR scheduler & Linear warmup + cosine decay \\
\quad Gradient clip & 1.0 \\
\quad Precision & fp32 \\
\quad Evaluation protocol & Stratified 10-fold CV, inner val split ($\sim$80/10/10 per fold) \\
\quad Per-dataset lr / wd / dropout & Tuned via random grid search \\
\quad Per-dataset batch / warmup / epochs / patience & Fixed per dataset family \\
\bottomrule
\end{tabular}%
}
\end{table}

\textbf{Larger-scale frozen-transfer settings.} Both larger-scale benchmarks use seeds 101--105, 100 training epochs, early-stopping patience 30, AdamW with learning rate $10^{-3}$ and zero weight decay, gradient clipping at 1.0, and fp32 precision. The training and evaluation batch sizes are 32 and 64 for ogbg-molhiv, and 16 and 32 for MCB-C. The \ours{} encoder retains 128 Fourier frequencies at scale 10, model width 128, four attention heads, three Transformer layers, feedforward width 256, and dropout 0.1. MCB-C uses component embeddings for $H_0$, $H_1$, and $H_2$.

\paragraph{Compute resources.}
Experiments were executed as single-GPU jobs on available compatible accelerators, including NVIDIA A100, RTX A6000, RTX PRO 6000, RTX 3090, and RTX 4090 GPUs. The full experimental campaign therefore used heterogeneous hardware, depending on resource availability. For each (method, dataset) pair, the complete random hyperparameter search comprises 30 configurations $\times$ 10 folds $\times$ 5 random seeds, or $1{,}500$ training runs (Appendix~\ref{app:grid_search}). To provide concrete reference measurements on 24-GB consumer GPUs,  Table~\ref{tab:compute_per_dataset} reports logged wall-clock times for the final selected~\ours{} configuration executed on a single RTX 3090 or RTX 4090. Each entry covers 5 random seeds $\times$ 10-fold cross-validation and excludes the preceding hyperparameter search, preliminary experiments, and configurations not included in the final paper. Because the entries were collected on two different GPU models rather than under a controlled hardware benchmark, they should be interpreted as representative reproduction times rather than direct cross-dataset runtime comparisons.

\begin{table}[h]
\centering
\caption{\textbf{Representative logged wall-clock times for the final selected~\ours{} configuration.} Each entry covers 5 random seeds $\times$ 10-fold cross-validation and was measured on a single RTX 3090 or RTX 4090 GPU (24~GB). These measurements demonstrate reproduction on 24 GB consumer hardware.}
\label{tab:compute_per_dataset}
\footnotesize \setlength{\tabcolsep}{6pt}
\begin{tabular}{lr}
\toprule
\textbf{Dataset} & \textbf{Wall-clock time} \\
\midrule
MUTAG & 3m 28s \\
DHFR & 7m 30s \\
COX2 & 7m 45s \\
IMDB-B & 9m 46s \\
PROTEINS & 22m 4s \\
IMDB-M & 24m 50s \\
NCI1 & 1h 35m 39s \\
Donut (sub-cell) & 1h 46m 35s \\
NCI109 & 1h 50m 44s \\
ORBIT5K & 1d 12h 47s \\
\bottomrule
\end{tabular}
\end{table}

\begin{table}[h]
\centering
\caption{\textbf{MCB-C token-cap sensitivity.} The cap applies to the computed persistence diagram, bounding the number of points taken from each homology dimension and keeping those of largest lifetime $d-b$. MCB-C uses $H_0$, $H_1$, and $H_2$, so at cap $C$ a diagram contributes at most $3C$ points. Values are balanced accuracy over 50 seed-runs.}
\label{tab:mcb_cap_sensitivity}
\footnotesize \setlength{\tabcolsep}{7pt}
\begin{tabular}{@{}lc@{}}
\toprule
\makecell{\textbf{Cap per homology}\\[-1pt]\textbf{component}} & \textbf{Balanced accuracy} \\
\midrule
512  &  65.82{\tiny$\pm$3.59} \\
1,024 (primary)  &  65.79{\tiny$\pm$4.27} \\
2,048  &  65.10{\tiny$\pm$4.07} \\
\bottomrule
\end{tabular}

\end{table}

\FloatBarrier
\section{Random Grid Search Configuration}
\label{app:grid_search}

All six methods follow the same random-search protocol. For each (method, dataset) we draw 30 random hyperparameter configurations and evaluate each under the full stratified 10-fold CV protocol with 5 fixed seeds; the optimization metric is the cross-fold cross-seed mean validation accuracy, and selection maximizes this mean. The total cost per (method, dataset) is therefore $30 \times 10 \times 5 = 1{,}500$ training runs. The shared optimization grid (\Tref{tab:grid_search_shared}) is identical across methods for an apples-to-apples comparison; per-method architecture grids (\Tref{tab:grid_search_arch}) specify the evaluated search spaces. Per-dataset training defaults (batch, warmup, epoch limit, early-stopping patience) are fixed per dataset family (\Tref{tab:grid_search_defaults}) and are not part of the search. The expanded PersLay variant comparison is reported separately in Table~\ref{tab:perslay_variants}.

\begin{table}[htbp]
\centering
\caption{\textbf{Shared optimization-hyperparameter grid.} Identical across all six methods.}
\label{tab:grid_search_shared}
\small
\begin{tabular}{lll}
\toprule
\textbf{Hyperparameter} & \textbf{Range} & \textbf{Distribution} \\
\midrule
learning rate  & $[10^{-5},\, 3\times 10^{-3}]$ & log-uniform \\
weight decay   & $[10^{-6},\, 10^{-2}]$ & log-uniform \\
dropout        & $[0,\, 0.5]$ & uniform \\
\bottomrule
\end{tabular}
\end{table}

\begin{table}[htbp]
\centering
\caption{\textbf{Per-method architecture-hyperparameter grids.} Method-specific search space drawn under the shared random-search protocol.}
\label{tab:grid_search_arch}
\small
\begin{threeparttable}
\begin{tabular}{lll}
\toprule
\textbf{Method} & \textbf{Hyperparameter} & \textbf{Values} \\
\midrule
\multirow{6}{*}{\ours{}\tnote{a}}
  & \texttt{d\_model}              & $\{64,\, 128,\, 192,\, 256\}$ \\
  & \texttt{nhead}                 & $\{2,\, 4,\, 8\}$ \\
  & \texttt{num\_layers}           & $\{2,\, 3,\, 4,\, 5\}$ \\
  & \texttt{num\_frequencies}      & $\{64,\, 128,\, 256,\, 384\}$ \\
  & \texttt{ff\_scale}             & $[1,\, 30]$, log-uniform \\
  & \texttt{use\_betti\_embedding} & $\{\text{true},\, \text{false}\}$ \\
\midrule
\multirow{6}{*}{\mst{}} & \texttt{hidden\_size} & $\{64,\, 96,\, 128\}$ \\
  & \texttt{num\_heads}               & $\{2,\, 4\}$ \\
  & \texttt{num\_equivariant\_layers} & $\{1,\, 2,\, 3\}$ \\
  & \texttt{num\_inducing}            & $\{2,\, 8,\, 16\}$ \\
  & \texttt{num\_pool\_queries}       & $\{1,\, 4,\, 8\}$ \\
  & \texttt{classifier\_dropout}     & $[0,\, 0.5]$, uniform \\
\midrule
\multirow{6}{*}{\persformer{}}
  & \texttt{hidden\_size}            & $\{64,\, 96,\, 128,\, 192,\, 256\}$ \\
  & \texttt{num\_attention\_layers}  & $\{2,\, 3,\, 4,\, 5,\, 6\}$ \\
  & \texttt{num\_attention\_heads}   & $\{2,\, 4,\, 8\}$ \\
  & \texttt{intermediate\_size}      & $\{128,\, 256,\, 384,\, 512,\, 768,\, 1024\}$ \\
  & \texttt{pooling}                 & $\{\text{attention},\, \text{attention\_sum}\}$ \\
  & \texttt{classifier\_dropout}     & $[0,\, 0.5]$, uniform \\
\midrule
\multirow{7}{*}{\xpert{}}
  & \texttt{grid\_size}              & $\{40,\, 50\}$ \\
  & \texttt{patch\_size}             & $\{2,\, 4,\, 5,\, 8,\, 10\}$ \\
  & \texttt{embed\_dim}              & $\{64,\, 96,\, 128,\, 192,\, 256\}$ \\
  & \texttt{depth}                   & $\{2,\, 3,\, 4,\, 5,\, 6\}$ \\
  & \texttt{num\_heads}              & $\{2,\, 4,\, 8\}$ \\
  & \texttt{mlp\_ratio}              & $\{2,\, 3,\, 4\}$ \\
  & \texttt{classifier\_dropout}     & $[0,\, 0.5]$, uniform \\
\midrule
\multirow{6}{*}{\perslay{}\tnote{b}}
  & \texttt{pweight\_power}          & $\{0.5,\, 1,\, 2\}$ \\
  & \texttt{pweight\_grid\_size}     & $\{10,\, 20,\, 30\}$ \\
  & \texttt{image\_size}             & $\{16,\, 20,\, 24,\, 32\}$ \\
  & \texttt{image\_sigma}            & $[0.02,\, 0.3]$, log-uniform \\
  & \texttt{image\_num\_filters}     & $\{8,\, 10,\, 16,\, 24,\, 32\}$ \\
  & \texttt{image\_filter\_size}     & $\{2,\, 3,\, 5\}$ \\
\midrule
\multirow{7}{*}{\persimage{}+MLP}
  & \texttt{resolution}              & $\{16,\, 20,\, 24,\, 32\}$ \\
  & \texttt{sigma}                   & $[0.02,\, 0.3]$, log-uniform \\
  & \texttt{weight}                  & $\{\text{none},\, \text{persistence},\, \text{linear\_ramp}\}$ \\
  & \texttt{ramp\_end}               & $\{0.5,\, 1,\, 2\}$ \\
  & \texttt{normalize}               & $\{\text{true},\, \text{false}\}$ \\
  & \texttt{pixel\_integration}      & $\{\text{true},\, \text{false}\}$ \\
  & \texttt{classifier\_hidden\_dim} & $\{0,\, 64,\, 128,\, 256,\, 384,\, 512\}$ \\
\bottomrule
\end{tabular}
\begin{tablenotes}\footnotesize
\item[a]\ours{} fixes \texttt{dim\_feedforward}$=2\,d_{\mathrm{model}}$ (not swept independently).
\item[b]The architecture grid above describes the Image configuration in the original unified sweep, where \texttt{layer\_type=image}, \texttt{pweight=grid}, \texttt{perm\_op=sum}, and \texttt{normalize\_inputs=true} are fixed. The expanded comparison additionally evaluates Line/Permutation-Equivariant, Landscape, and Exponential variants as reported in Table~\ref{tab:perslay_variants}.
\end{tablenotes}
\end{threeparttable}
\end{table}

\begin{table}[htbp]
\centering
\caption{\textbf{Fixed per-dataset training defaults.} Selected once per dataset family and not part of the search; same defaults apply to all methods on a given dataset family.}
\label{tab:grid_search_defaults}
\footnotesize \setlength{\tabcolsep}{4pt}
\resizebox{\linewidth}{!}{%
\begin{tabular}{lcccc}
\toprule
\textbf{Dataset family} & \textbf{Batch} & \textbf{Warmup epochs} & \textbf{Total epochs} & \textbf{ES patience} \\
\midrule
TU graphs (MUTAG, PROTEINS, COX2, DHFR, IMDB-B/M, NCI1/109) & 64 & 50 & 300 & 20 \\
ORBIT5K              & 16 & 50 & 300 & 20 \\
Synthetic donut      & 32 & 5  & 400 & 60 \\
\bottomrule
\end{tabular}%
}
\end{table}

\FloatBarrier
\section{Prior-paper benchmark numbers}
\label{app:prior_paper_numbers}

For reference, Table~\ref{tab:prior_paper_numbers} records each baseline's accuracy as published in its original paper. These numbers retain each paper's original pipeline---persistence variant, splits, and per-method tuning---and are not directly comparable to the unified reproduction in Table~\ref{tab:graph_benchmarks}.

\begin{table}[htbp]
\centering
\caption{\textbf{Prior-paper accuracy (\%).} As reported in each method's original publication, using each paper's original pipeline. The \mst{} row uses ordinary persistence diagrams without clustering~\citep{multiset-transformer-2025}; dashes denote unreported results.}\label{tab:prior_paper_numbers}
\scriptsize \setlength{\tabcolsep}{3.0pt}
\resizebox{\linewidth}{!}{%
\begin{tabular}{lccccccccc}
\toprule
\textbf{Method} & \textbf{ORBIT5K} & \textbf{MUTAG} & \textbf{PROTEINS} & \textbf{COX2} & \textbf{DHFR} & \textbf{IMDB-B} & \textbf{IMDB-M} & \textbf{NCI1} & \textbf{NCI109} \\
\midrule
\quad~\perslay{} & 87.7{\tiny$\pm$1.0} & 89.8 & 74.8 & 80.9 & 80.3 & 71.2 & 48.8 & 73.5 & 69.5 \\
\quad~\persformer{} & 91.2{\tiny$\pm$0.8} & 89.9{\tiny$\pm$2.1} & --- & --- & --- & --- & --- & --- & --- \\
\quad~\xpert{} & 87.0{\tiny$\pm$0.7} & 91.0{\tiny$\pm$5.2} & 75.7{\tiny$\pm$3.8} & 84.4{\tiny$\pm$3.5} & 81.9{\tiny$\pm$2.9} & 72.6{\tiny$\pm$3.4} & 50.0{\tiny$\pm$2.0} & --- & --- \\
\quad~\mst{} (Ord., no clustering) & --- & 89.25{\tiny$\pm$0.77} & 72.92{\tiny$\pm$0.47} & 81.20{\tiny$\pm$1.04} & 74.12{\tiny$\pm$0.37} & 70.76{\tiny$\pm$0.47} & 45.44{\tiny$\pm$0.35} & 69.12{\tiny$\pm$0.17} & 66.38{\tiny$\pm$0.40} \\
\bottomrule
\end{tabular}
}
\vspace{1.2mm}
\begin{minipage}{\linewidth}
\normalsize\raggedright In a separate protocol-sensitivity control of the \xpert{} reproduction, applying the released \xpert{} choices of training on all non-test samples and reporting the maximum test accuracy across epochs raises the nine-dataset macro from 70.77 under validation-only selection to 77.07, with the inputs, folds, seeds, and training settings otherwise fixed between the two protocols. This control is separate from the method-level sweep selections reported in Table~\ref{tab:graph_benchmarks}. Because the released choices select the epoch on the test fold, all unified comparisons in the main text retain validation-only selection.
\end{minipage}
\end{table}

\FloatBarrier
\section{License and Asset Provenance}
\label{app:licenses}

We use the following existing assets, all cited at first use in the main text. URLs and license names below reflect the public repositories at the time of submission.

\smallskip 
\noindent\textbf{Datasets.}
\textbf{TU Dortmund graph collection}~\citep{morris2020tudataset}: MUTAG, PROTEINS, COX2, DHFR, IMDB-B, IMDB-M, NCI1, NCI109. Available at \url{https://chrsmrrs.github.io/datasets/}. The collection landing page does not specify a unified license; sub-datasets originate from heterogeneous publications and are conventionally redistributed for academic research per those publications' terms.

\textbf{ORBIT5K} (5{,}000 linked twist map orbits, 5 classes): generated by the script released with the PersLay paper~\citep{carriere2020perslay}, available at \url{https://github.com/MathieuCarriere/perslay}; the dataset's distribution license follows the PersLay repository's MIT License.

\textbf{ogbg-molhiv}~\citep{hu2020ogb} is distributed through the Open Graph Benchmark at \url{https://ogb.stanford.edu/docs/graphprop/}. The OGB documentation states an MIT License for ogbg-molhiv.

\textbf{MCB-C} is a filtered MCB-A subset released with the Flood Complex study~\citep{graf2025flood} and distributed through \url{https://github.com/plus-rkwitt/flooder}.

\textbf{Plain-vs-Sprinkle Donut benchmark} (newly introduced in Section~\ref{sec:donut_subcell}): constructed by the authors for this paper using SDXL-Turbo and TripoSR (see paragraph below). The point clouds, persistence-diagram computation script, train/validation/test split files, and a complete EDA report will be released alongside our code repository under a CC-BY 4.0 license upon publication.

\smallskip 
\noindent\textbf{Software libraries.}
\textbf{GUDHI}~\citep{gudhi2015}: persistence-diagram computation via Alpha complex and sublevel filtrations. \url{https://gudhi.inria.fr/licensing/}; the core library is released under the MIT License, while a few modules are effectively GPLv3 through dependencies on CGAL, Miniball, or PyKeOps---users employing those modules should consult per-module headers for the effective license. \textbf{PyTorch}: Transformer encoder, optimizer, and training pipeline. \url{https://pytorch.org/}, BSD-3-Clause. \textbf{scikit-learn}: classifier head for the~\persimage{}+MLP baseline. \url{https://scikit-learn.org/}, BSD-3-Clause.

\smallskip 
\noindent\textbf{Baseline encoder implementations.}
We reproduce the three primary Transformer-based baselines under our unified protocol: \xpert{}, \persformer{}, and \mst{}. We additionally evaluate \perslay{} and \persimage{}+MLP as broader PD comparisons. \textbf{\persimage{}+MLP}~\citep{adams2017persistence} is re-implemented from the original paper's specification with a scikit-learn MLP classifier. \textbf{\perslay{}}~\citep{carriere2020perslay} is reproduced from the authors' code at \url{https://github.com/MathieuCarriere/perslay} (MIT License). \textbf{\persformer{}}~\citep{reinauer2021persformer} is reproduced from the giotto-deep code at \url{https://github.com/giotto-ai/giotto-deep}, released under the GNU Affero General Public License v3.0 (AGPL-3.0); we use it only for evaluation and comparison and do not redistribute derivative code. \textbf{\xpert{}}~\citep{kim2024xpert} is reproduced from the authors' code at \url{https://github.com/sehunfromdaegu/xpert} (MIT License). \textbf{\mst{}}~\citep{multiset-transformer-2025} adapts multiplicity-aware attention and readout from the authors' implementation at \url{https://github.com/minghuax/MST}; optional clustering is not used in our comparison.

\smallskip
\noindent\textbf{Generative tools used for donut dataset construction.}
\textbf{SDXL-Turbo} (Stability-AI \texttt{generative-models}, \url{https://github.com/Stability-AI/generative-models}) is used at dataset-construction time only, for the text-to-image stage, under the Stability AI Non-Commercial Research Community License Agreement. \textbf{TripoSR} (VAST-AI-Research \texttt{TripoSR}, \url{https://github.com/VAST-AI-Research/TripoSR}) is used for the image-to-3D stage under the MIT License. Neither model is shipped with our released code or used at training/inference time; only the resulting \texttt{.npy} point clouds will be released, under CC-BY 4.0.


\newpage

\begin{thebibliography}{33}

\bibitem[Adams et~al.(2017)]{adams2017persistence}
Adams, H., Emerson, T., Kirby, M., Neville, R., Peterson, C., Shipman, P., Chepushtanova, S., Hanson, E., Motta, F., and Ziegelmeier, L.
\newblock Persistence images: A stable vector representation of persistent homology.
\newblock~\emph{Journal of Machine Learning Research}, 18(8):1--35, 2017.

\bibitem[Barnes et~al.(2021)]{barnes2021comparative}
Barnes, D., Polanco, L., and Perea, J.~A.
\newblock A comparative study of machine learning methods for persistence diagrams.
\newblock \emph{Frontiers in Artificial Intelligence}, 4:681174, 2021.

\bibitem[Bubenik(2015)]{bubenik2015statistical}
Bubenik, P.
\newblock Statistical topological data analysis using persistence landscapes.
\newblock~\emph{Journal of Machine Learning Research}, 16(3):77--102, 2015.

\bibitem[Cang \& Wei(2017)]{cangwei2017topologynet}
Cang, Z. and Wei, G.-W.
\newblock TopologyNet: Topology based deep convolutional and multi-task neural networks for biomolecular property predictions.
\newblock~\emph{PLoS Computational Biology}, 13(7):e1005690, 2017.

\bibitem[Cang et~al.(2018)]{cangwuwei2018drugdesign}
Cang, Z., Mu, L., and Wei, G.-W.
\newblock Representability of algebraic topology for biomolecules in machine learning based scoring and virtual screening.
\newblock~\emph{PLoS Computational Biology}, 14(1):e1005929, 2018.

\bibitem[Carlsson(2009)]{carlsson2009topology}
Carlsson, G.
\newblock Topology and data.
\newblock~\emph{Bulletin of the American Mathematical Society}, 46(2):255--308, 2009.

\bibitem[Carri{\`e}re et~al.(2017)]{carriere2017sliced}
Carri{\`e}re, M., Cuturi, M., and Oudot, S.
\newblock Sliced Wasserstein kernel for persistence diagrams.
\newblock In~\emph{International Conference on Machine Learning}, pp. 664--673, 2017.

\bibitem[Carri{\`e}re et~al.(2020)]{carriere2020perslay}
Carri{\`e}re, M., Chazal, F., Ike, Y., Lacombe, T., Royer, M., and Umeda, Y.
\newblock PersLay: A neural network layer for persistence diagrams and new graph topological signatures.
\newblock In~\emph{International Conference on Artificial Intelligence and Statistics}, pp. 2786--2796, 2020.

\bibitem[Castilla Rieso et~al.(2026)]{castillarieso2026standards}
Castilla Rieso, I., Peltier, S., and Carr{\'e}, P.
\newblock Position: Topological Machine Learning Cannot Progress without Experimental Standards.
\newblock In~\emph{Proceedings of the 43rd International Conference on Machine Learning}, 2026.

\bibitem[Chazal et~al.(2015)]{chazal2015silhouette}
Chazal, F., Fasy, B.~T., Lecci, F., Rinaldo, A., and Wasserman, L.
\newblock Stochastic convergence of persistence landscapes and silhouettes.
\newblock~\emph{Journal of Computational Geometry}, 6(2):140--161, 2015.

\bibitem[Devlin et~al.(2019)]{devlin2019bert}
Devlin, J., Chang, M.-W., Lee, K., and Toutanova, K.
\newblock BERT: Pre-training of deep bidirectional transformers for language understanding.
\newblock In~\emph{NAACL-HLT}, 2019.


\bibitem[Gameiro et~al.(2015)]{gameiro2015protein}
Gameiro, M., Hiraoka, Y., Izumi, S., Kramar, M., Mischaikow, K., and Nanda, V.
\newblock A topological measurement of protein compressibility.
\newblock~\emph{Japan Journal of Industrial and Applied Mathematics}, 32(1):1--17, 2015.

\bibitem[Ghahremani et~al.(2018)]{ghahremani2018alteration}
Ghahremani, M., Yoo, J., Chung, S.~J., Yoo, K., Ye, J.~C., and Jeong, Y.
\newblock Alteration in the local and global functional connectivity of resting state networks in {Parkinson}'s disease.
\newblock~\emph{Journal of Movement Disorders}, 11(1):13--23, 2018.

\bibitem[Graf et~al.(2025)]{graf2025flood}
Graf, F., Pellizzoni, P., Uray, M., Huber, S., and Kwitt, R.
\newblock The Flood Complex: Large-scale persistent homology on millions of points.
\newblock In~\emph{Advances in Neural Information Processing Systems}, volume~38, 2025.

\bibitem[Hacquard et~al.(2023)]{hacquard2023statistical}
Hacquard, O., Blanchard, G., and Levrard, C.
\newblock Statistical learning on measures: an application to persistence diagrams.
\newblock~\emph{arXiv preprint arXiv:2303.08456}, 2023.

\bibitem[Hiraoka et~al.(2016)]{hiraoka2016amorphous}
Hiraoka, Y., Nakamura, T., Hirata, A., Escolar, E.~G., Matsue, K., and Nishiura, Y.
\newblock Hierarchical structures of amorphous solids characterized by persistent homology.
\newblock~\emph{Proceedings of the National Academy of Sciences}, 113(26):7035--7040, 2016.

\bibitem[Hofer et~al.(2017)]{hofer2017deep}
Hofer, C., Kwitt, R., Niethammer, M., and Uhl, A.
\newblock Deep learning with topological signatures.
\newblock In~\emph{Advances in Neural Information Processing Systems}, volume~30, 2017.

\bibitem[Hofer et~al.(2019)]{hofer2019learning}
Hofer, C., Kwitt, R., and Niethammer, M.
\newblock Learning representations of persistence barcodes.
\newblock~\emph{Journal of Machine Learning Research}, 20(126):1--45, 2019.

\bibitem[Horn et~al.(2022)]{horn2022togl}
Horn, M., De~Brouwer, E., Moor, M., Moreau, Y., Rieck, B., and Borgwardt, K.
\newblock Topological graph neural networks.
\newblock In~\emph{International Conference on Learning Representations (ICLR)}, 2022.



\bibitem[Hu et~al.(2020)]{hu2020ogb}
Hu, W., Fey, M., Zitnik, M., Dong, Y., Ren, H., Liu, B., Catasta, M., and Leskovec, J.
\newblock Open Graph Benchmark: Datasets for machine learning on graphs.
\newblock In~\emph{Advances in Neural Information Processing Systems}, volume~33, 2020.

\bibitem[Kerber \& Russold(2024)]{kerber2024graphcode}
Kerber, M. and Russold, F.
\newblock Graphcode: Learning from multiparameter persistent homology using graph neural networks.
\newblock In~\emph{Advances in Neural Information Processing Systems}, 2024.

\bibitem[Khasawneh \& Munch(2016)]{khasawnehmunch2016chatter}
Khasawneh, F.~A. and Munch, E.
\newblock Chatter detection in turning using persistent homology.
\newblock~\emph{Mechanical Systems and Signal Processing}, 70--71:527--541, 2016.

\bibitem[Kim et~al.(2020)]{kim2020pllay}
Kim, K., Kim, J., Zaheer, M., Kim, J., Chazal, F., and Wasserman, L.
\newblock PLLay: Efficient topological layer based on persistent landscapes.
\newblock In~\emph{Advances in Neural Information Processing Systems}, 2020.

\bibitem[Kim(2024)]{kim2024xpert}
Kim, S.
\newblock xPerT: Extended Persistence Transformer.
\newblock~\emph{arXiv preprint arXiv:2410.14193}, 2024.

\bibitem[Kipf \& Welling(2017)]{kipf2017gcn}
Kipf, T.~N. and Welling, M.
\newblock Semi-supervised classification with graph convolutional networks.
\newblock In~\emph{International Conference on Learning Representations}, 2017.

\bibitem[Kovacev-Nikolic et~al.(2016)]{kovacevnikolic2016protein}
Kovacev-Nikolic, V., Bubenik, P., Nikoli{\'c}, D., and Heo, G.
\newblock Using persistent homology and dynamical distances to analyze protein binding.
\newblock~\emph{Statistical Applications in Genetics and Molecular Biology}, 15(1):19--38, 2016.

\bibitem[Kusano et~al.(2016)]{kusano2016pwg}
Kusano, G., Hiraoka, Y., and Fukumizu, K.
\newblock Persistence weighted Gaussian kernel for topological data analysis.
\newblock In~\emph{International Conference on Machine Learning}, pp. 2004--2013, 2016.

\bibitem[Lee et~al.(2019)]{lee2019set}
Lee, J., Lee, Y., Kim, J., Kosiorek, A., Choi, S., and Teh, Y.~W.
\newblock Set transformer: A framework for attention-based permutation-invariant neural networks.
\newblock In~\emph{International Conference on Machine Learning}, pp. 3744--3753, 2019.

\bibitem[Leshno et~al.(1993)]{leshno1993nonpolynomial}
Leshno, M., Lin, V.~Y., Pinkus, A., and Schocken, S.
\newblock Multilayer feedforward networks with a nonpolynomial activation function can approximate any function.
\newblock~\emph{Neural Networks}, 6(6):861--867, 1993.

\bibitem[Loshchilov \& Hutter(2019)]{loshchilov2019decoupled}
Loshchilov, I. and Hutter, F.
\newblock Decoupled weight decay regularization.
\newblock In~\emph{International Conference on Learning Representations}, 2019.

\bibitem[Mildenhall et~al.(2020)]{mildenhall2020nerf}
Mildenhall, B., Srinivasan, P.~P., Tancik, M., Barron, J.~T., Ramamoorthi, R., and Ng, R.
\newblock NeRF: Representing scenes as neural radiance fields for view synthesis.
\newblock In~\emph{European Conference on Computer Vision}, pp. 405--421, 2020.

\bibitem[Morris et~al.(2020)]{morris2020tudataset}
Morris, C., Kriege, N.~M., Bause, F., Kersting, K., Mutzel, P., and Neumann, M.
\newblock TUDataset: A collection of benchmark datasets for learning with graphs.
\newblock In~\emph{ICML Workshop on Graph Representation Learning and Beyond}, 2020.

\bibitem[Nguyen et~al.(2019)]{nguyen2019d3r}
Nguyen, D.~D., Cang, Z., Wu, K., Wang, M., Cao, Y., and Wei, G.-W.
\newblock Mathematical deep learning for pose and binding affinity prediction and ranking in {D3R} {G}rand {C}hallenges.
\newblock~\emph{Journal of Computer-Aided Molecular Design}, 33(1):71--82, 2019.

\bibitem[Pegoraro(2026)]{persistence-spheres-2026}
Pegoraro, M.
\newblock Persistence Spheres: Bi-continuous Representations of Persistence Diagrams.
\newblock In~\emph{International Conference on Learning Representations (ICLR)}, 2026.

\bibitem[Perea \& Harer(2015)]{pereaharer2015sliding}
Perea, J.~A. and Harer, J.
\newblock Sliding windows and persistence: An application of topological methods to signal analysis.
\newblock~\emph{Foundations of Computational Mathematics}, 15(3):799--838, 2015.

\bibitem[Petri et~al.(2014)]{petri2014homological}
Petri, G., Expert, P., Turkheimer, F., Carhart-Harris, R., Nutt, D., Hellyer, P.~J., and Vaccarino, F.
\newblock Homological scaffolds of brain functional networks.
\newblock~\emph{Journal of the Royal Society Interface}, 11(101):20140873, 2014.

\bibitem[Qi et~al.(2017)]{qi2017pointnet}
Qi, C.~R., Su, H., Mo, K., and Guibas, L.~J.
\newblock PointNet: Deep learning on point sets for 3D classification and segmentation.
\newblock In~\emph{IEEE Conference on Computer Vision and Pattern Recognition}, pp. 652--660, 2017.

\bibitem[Reinauer et~al.(2021)]{reinauer2021persformer}
Reinauer, R., Caorsi, M., and Berkouk, N.
\newblock Persformer: A transformer architecture for topological machine learning.
\newblock~\emph{arXiv preprint arXiv:2112.15210}, 2021.

\bibitem[Reininghaus et~al.(2015)]{reininghaus2015stable}
Reininghaus, J., Huber, S., Bauer, U., and Kwitt, R.
\newblock A stable multi-scale kernel for topological machine learning.
\newblock In~\emph{IEEE Conference on Computer Vision and Pattern Recognition}, pp. 4741--4748, 2015.

\bibitem[Royer et~al.(2021)]{royer2021atol}
Royer, M., Chazal, F., Levrard, C., Umeda, Y., and Ike, Y.
\newblock ATOL: Measure vectorization for automatic topologically-oriented learning.
\newblock In~\emph{International Conference on Artificial Intelligence and Statistics}, pp. 1000--1008, 2021.

\bibitem[Saadatfar et~al.(2017)]{saadatfar2017granular}
Saadatfar, M., Takeuchi, H., Robins, V., Francois, N., and Hiraoka, Y.
\newblock Pore configuration landscape of granular crystallization.
\newblock~\emph{Nature Communications}, 8:15082, 2017.

\bibitem[Saggar et~al.(2018)]{saggar2018dynamical}
Saggar, M., Sporns, O., Gonzalez-Castillo, J., Bandettini, P.~A., Carlsson, G., Glover, G., and Reiss, A.~L.
\newblock Towards a new approach to reveal dynamical organization of the brain using topological data analysis.
\newblock~\emph{Nature Communications}, 9(1):1399, 2018.

\bibitem[Sizemore et~al.(2019)]{sizemore2019importance}
Sizemore, A.~E., Phillips-Cremins, J.~E., Ghrist, R., and Bassett, D.~S.
\newblock The importance of the whole: Topological data analysis for the network neuroscientist.
\newblock~\emph{Network Neuroscience}, 3(3):656--673, 2019.

\bibitem[Stone(1948)]{stone1948generalized}
Stone, M.~H.
\newblock The generalized Weierstrass approximation theorem.
\newblock~\emph{Mathematics Magazine}, 21:167--184, 237--254, 1948.

\bibitem[Sun et~al.(2009)]{sun2009hks}
Sun, J., Ovsjanikov, M., and Guibas, L.
\newblock A concise and provably informative multi-scale signature based on heat diffusion.
\newblock In~\emph{Computer Graphics Forum}, volume~28, pp. 1383--1392, 2009.

\bibitem[Tancik et~al.(2020)]{tancik2020fourier}
Tancik, M., Srinivasan, P.~P., Mildenhall, B., Fridovich-Keil, S., Raghavan, N., Singhal, U., Ramamoorthi, R., Barron, J.~T., and Ng, R.
\newblock Fourier features let networks learn high frequency functions in low dimensional domains.
\newblock In~\emph{Advances in Neural Information Processing Systems}, 2020.

\bibitem[The GUDHI Project(2015)]{gudhi2015}
The GUDHI Project.
\newblock~\emph{GUDHI User and Reference Manual}.
\newblock GUDHI Editorial Board, 2015.
\newblock \url{https://gudhi.inria.fr/}.

\bibitem[Townsend et~al.(2020)]{townsend2020}
Townsend, J., Micucci, C.~P., Hymel, J.~H., Maroulas, V., and Vogiatzis, K.~D.
\newblock Representation of molecular structures with persistent homology for machine learning applications in chemistry.
\newblock~\emph{Nature Communications}, 11:3230, 2020.

\bibitem[Umeda(2017)]{umeda2017betti}
Umeda, Y.
\newblock Time series classification via topological data analysis.
\newblock~\emph{Transactions of the Japanese Society for Artificial Intelligence}, 32(3):D-G72, 2017.

\bibitem[Vaswani et~al.(2017)]{vaswani2017attention}
Vaswani, A., Shazeer, N., Parmar, N., Uszkoreit, J., Jones, L., Gomez, A.~N., Kaiser, {\L}., and Polosukhin, I.
\newblock Attention is all you need.
\newblock In~\emph{Advances in Neural Information Processing Systems}, pp. 5998--6008, 2017.

\bibitem[Verma et~al.(2025)]{verma2025pipe}
Verma, Y., Souza, A.~H., and Garg, V.
\newblock Positional encoding meets persistent homology on graphs.
\newblock In~\emph{International Conference on Machine Learning}, 2025.

\bibitem[Wang et~al.(2024)]{multiset-transformer-2025}
Wang, M., Huang, Z., and Xu, J.
\newblock Multiset Transformer: Advancing Representation Learning in Persistence Diagrams.
\newblock~\emph{arXiv preprint arXiv:2411.14662}, 2024.

\bibitem[Wee \& Xia(2021)]{weexia2021}
Wee, J. and Xia, K.
\newblock Forman persistent {Ricci} curvature ({FPRC})-based machine learning models for protein--ligand binding affinity prediction.
\newblock~\emph{Briefings in Bioinformatics}, 22(6):bbab136, 2021.

\bibitem[Xia \& Wei(2014)]{xia2014persistent}
Xia, K. and Wei, G.-W.
\newblock Persistent homology analysis of protein structure, flexibility, and folding.
\newblock~\emph{International Journal for Numerical Methods in Biomedical Engineering}, 30(8):814--844, 2014.

\bibitem[Xiong et~al.(2020)]{xiong2020layer}
Xiong, R., Yang, Y., He, D., Zheng, K., Zheng, S., Xing, C., Zhang, H., Lan, Y., Wang, L., and Liu, T.-Y.
\newblock On layer normalization in the Transformer architecture.
\newblock In~\emph{International Conference on Machine Learning}, pp. 10524--10533, 2020.

\bibitem[Xu et~al.(2019)]{xu2019gin}
Xu, K., Hu, W., Leskovec, J., and Jegelka, S.
\newblock How powerful are graph neural networks?
\newblock In~\emph{International Conference on Learning Representations}, 2019.

\bibitem[Yoo et~al.(2016)]{yoo2016vineyard}
Yoo, J., Kim, E.~Y., Ahn, Y.~M., and Ye, J.~C.
\newblock Topological persistence vineyard for dynamic functional brain connectivity during resting and gaming stages.
\newblock~\emph{Journal of Neuroscience Methods}, 267:1--13, 2016.

\bibitem[You et~al.(2024)]{you2024phlp}
You, J., Heo, E., and Jung, J.-H.
\newblock {PHLP}: Sole persistent homology for link prediction --- interpretable feature extraction.
\newblock~\emph{arXiv preprint arXiv:2404.15225}, 2024.

\bibitem[Zaheer et~al.(2017)]{zaheer2017deepsets}
Zaheer, M., Kottur, S., Ravanbakhsh, S., Poczos, B., Salakhutdinov, R., and Smola, A.~J.
\newblock Deep Sets.
\newblock In~\emph{Advances in Neural Information Processing Systems}, volume~30, pp. 3391--3401, 2017.

\bibitem[Zhao et~al.(2021)]{zhao2021point}
Zhao, H., Jiang, L., Jia, J., Torr, P.~H., and Koltun, V.
\newblock Point transformer.
\newblock In~\emph{IEEE/CVF International Conference on Computer Vision}, pp. 16259--16268, 2021.
\end{thebibliography}
\end{document}